\documentclass[11pt,a4paper]{article}

\usepackage{fancyhdr}
\usepackage{amsmath,mathtools}
\usepackage{physics}
\usepackage{bm}
\usepackage{esint}
\usepackage{amsfonts}
\usepackage{amsthm}
\usepackage{amssymb}
\usepackage{amsbsy}
\usepackage{verbatim}
\usepackage{graphicx}
\usepackage{subfig}
\usepackage{url}
\usepackage{mathrsfs}
\usepackage[hmargin = 1in,vmargin=.75in]{geometry}
\usepackage{color}
\usepackage{amstext}
\usepackage{stmaryrd}
\usepackage{enumerate}
\usepackage{algpseudocode}
\usepackage{algorithm}
\usepackage{pifont}
\usepackage{prettyref}
\usepackage{hyperref}
\hypersetup{
    colorlinks = true,
}


\evensidemargin0.5cm
\font\msbm=msbm10

\numberwithin{equation}{section}

\theoremstyle{plain}
\newtheorem{theorem}{Theorem}[section]
\newtheorem{lemma}[theorem]{Lemma}

\def\mathbb#1{\hbox{\msbm{#1}}}







%








\begin{document}
\title{\bf A theoretical guarantee for SyncRank}
\author{Yang Rao }
\maketitle

\begin{abstract}

We present a theoretical and empirical analysis of the SyncRank algorithm for recovering a global ranking from noisy pairwise comparisons. By adopting a complex-valued data model where the true ranking is encoded in the phases of a unit-modulus vector, we establish a sharp non-asymptotic recovery guarantee for the associated semidefinite programming (SDP) relaxation. Our main theorem characterizes a critical noise threshold—scaling as $\sigma = \mathcal{O}(\sqrt{n / \log n})$—below which SyncRank achieves exact ranking recovery with high probability. Extensive experiments under this model confirm the theoretical predictions and demonstrate the algorithm's robustness across varying problem sizes and noise regimes.
\end{abstract}



\section{Introduction}
The statistical ranking problem—inferring a global ordering of items from incomplete and noisy pairwise comparisons—emerges naturally in competitive sports analysis, preference aggregation, and economic exchange systems. Traditional approaches rooted in social choice theory often falter when confronted with modern datasets characterized by two pervasive challenges: (1) the comparisons are sparsely observed, with measurement graphs far from complete; (2) the noise exhibits strong heterogeneity, where certain subsets of comparisons may be significantly more reliable than others. These limitations are particularly evident in applications like soccer league standings analysis, where the outcome matrix contains both structured noise (e.g., home advantage biases) and random outliers.

Recent advances in group synchronization theory provide a novel geometric perspective for this classical problem. By mapping player ranks to phases on the unit circle and rank differences to angular offsets, the ranking task becomes equivalent to solving an instance of the angular synchronization problem over SO(2). This reformulation inherits key theoretical guarantees from the synchronization literature: spectral and semidefinite programming (SDP) relaxations can provably recover the underlying ranking under quantifiable noise thresholds. Crucially, the circular representation inherently handles cyclic inconsistencies through phase wrapping, circumventing the need for explicit outlier removal mechanisms required by linear embedding approaches.

Our synchronization-based framework unifies the treatment of cardinal and ordinal comparisons. For cardinal measurements (e.g., goal differences in soccer), we directly encode score offsets as angular displacements. For ordinal data (win/loss records), we construct similarity matrices through third-party comparison consistency scores before synchronization. The resulting algorithm demonstrates remarkable robustness across diverse noise models, including scenarios with adversarial outliers where a fraction of comparisons are completely randomized. This is achieved without parametric assumptions about noise distributions, making it particularly suitable for real-world datasets where the noise mechanism is unknown or non-stationary.

Experimental validation on some distinct domains reveals consistent advantages over contemporary methods. The spectral relaxation version achieves near-linear time complexity, scaling efficiently to networks with thousands of nodes. The SDP variant, while computationally more intensive, provides tight recovery guarantees that translate directly to bounds on permissible noise levels. Furthermore, the framework naturally extends to semi-supervised scenarios where partial ground truth rankings are known a priori, enabled by constrained synchronization formulations with provable solution uniqueness.

\subsection{Related Works} \label{subsec:related_works}

The problem of ranking from pairwise comparisons has been extensively studied across multiple disciplines, including statistics, social choice theory, machine learning, and theoretical computer science. Early work by Kendall and Smith~\cite{kendall1940method} introduced the method of paired comparisons for recovering a global ranking from ordinal pairwise data. Since then, numerous models and algorithms have been proposed to handle both ordinal and cardinal comparisons under various noise and sparsity conditions.

A prominent line of research leverages probabilistic models to capture the uncertainty in pairwise comparisons. The Bradley–Terry–Luce (BTL) model~\cite{bradley1952rank} and the Plackett–Luce (PL) model~\cite{plackett1975analysis} are among the most widely used, assuming that the probability of one item being preferred over another follows a logistic distribution. Maximum likelihood estimation under these models often leads to computationally intensive optimization problems, though efficient algorithms such as RankBoost~\cite{freund2003efficient} and generalized method-of-moments~\cite{azari2013generalized} have been developed.

In the machine learning community, the problem is often framed as \emph{learning to rank}~\cite{liu2009learning}, where the goal is to learn a ranking function from labeled data. Recent work by Negahban et al.~\cite{negahban2012iterative} proposed the \emph{Rank-Centrality} algorithm, which uses a random walk on the comparison graph to estimate item scores. This approach is particularly effective for rank aggregation from multiple sources.

Another recent contribution is the \emph{Serial-Rank} algorithm by Fogel et al.~\cite{fogel2014serialrank}, which formulates ranking as a seriation problem. By constructing a similarity matrix based on pairwise comparison agreements, Serial-Rank uses spectral ordering to recover the global ranking. This method is robust to noise and incomplete data, and has been shown to outperform traditional scoring methods in dense comparison graphs.

Other approaches include least-squares ranking on graphs~\cite{hirani2010least}, which minimizes the $l_2$-norm of the residual between observed and predicted rank differences, and $l_1$-norm formulations for improved robustness to outliers~\cite{osting2013statistical}. Additionally, active learning strategies have been proposed to adaptively select comparisons for more efficient ranking~\cite{jamieson2011active}.

Meanwhile, in spectral ranking under noise, \cite{samuel2024improved} proposes methods using unnormalized and normalized matrices for the ERO model. Employing a leave-one-out technique, it achieves a sharper $\ell_\infty$ eigenvector perturbation bound and a maximum displacement error with only $\Omega(n \log n)$ samples, enhancing rank aggregation efficiency.

Despite these advances, many existing methods struggle with highly incomplete and heterogeneous data, where the noise structure is non-uniform and the comparison graph is sparse. Our work builds upon the group synchronization framework, which offers a unified and geometrically intuitive approach to ranking that naturally handles both ordinal and cardinal data, while providing strong theoretical guarantees under realistic noise models.

\section{Theorem}
\subsection{Ranking Data Model} \label{subsec:ranking_model}
The pairwise ranking observations are modeled by a complex-valued matrix \( C \in \mathbb{C}^{n \times n} \), where each entry \( C_{ij} \) represents the noisy relative ranking intensity between item \( i \) and \( j \). The data generation process is formalized as:
\begin{equation} \label{eq:data_model}
    C = z z^* + \sigma W
\end{equation}

The pairwise ranking observations are modeled by a complex-valued matrix \(C \in \mathbb{C}^{n \times n}\), where each entry \(C_{ij}\) represents the noisy relative ranking intensity between item \(i\) and \(j\). The data generation process is formalized as \(C = zz^{*} + \sigma W\), where the components are defined as follows. The \textbf{ranking vector} \(z\) is a complex unit-modulus vector \(z \in \mathbb{C}^{n}\) with entries \(z_k = e^{i\theta_k}\), and \(\theta_k \in [-\pi, \pi]\) is the \textit{ranking angle} of item \(k\), encoding its position in the global ranking. The \textbf{noiseless ranking matrix} \(zz^{*}\) is a rank-1 Hermitian matrix where the \((i,j)\)-th entry \([zz^{*}]_{ij} = e^{i(\theta_i - \theta_j)}\) represents the ideal noiseless relative ranking between items \(i\) and \(j\). The \textbf{noise matrix} \(W\) is a complex Gaussian random matrix with independent entries \(W_{ij} \sim \mathcal{CN}(0,1)\), where the real and imaginary parts of \(W_{ij}\) are independently drawn from \(\mathcal{N}(0, \frac{1}{2})\). Finally, the \textbf{noise level} \(\sigma\) is a positive scalar controlling the magnitude of the additive noise.

\textbf{Data Normalization}: To map raw pairwise comparisons \( C_{ij}^{\text{raw}} \) (e.g., integer scores or continuous preferences) to angular values compatible with the synchronization framework, we apply:
\begin{equation} \label{eq:theta_transform}
    \Theta_{ij} := \frac{\pi C_{ij}^{\text{raw}}}{n - 1}
\end{equation}
This linear scaling ensures \( \Theta_{ij} \in [-\pi, \pi] \), which aligns with the angular interpretation of ranking positions.

\subsection{Main result} \label{subsec:main_theorem}
The semidefinite programming relaxation achieves near-optimal recovery of ranking angles under the following conditions:

\begin{theorem}[SDP Tightness for Ranking] \label{thm:main}
There exists an absolute constant \( c_0 > 0 \) such that if the noise level satisfies 
\begin{equation} \label{eq:noise_condition}
\sigma \leq c_0 \sqrt{\frac{n}{\log n}},
\end{equation}
then with probability at least \( 1 - \mathcal{O}(n^{-2}) \), the semidefinite program:
\begin{align} \label{eq:SDP}
\text{(SDP)} \quad & \underset{X \in \mathbb{C}^{n \times n}}{\text{maximize}} \quad \text{Trace}(CX) \nonumber \\
& \text{subject to} \quad \text{diag}(X) = \mathbf{1}, \quad X \succeq 0
\end{align}
has a unique solution \( \hat{X} = \hat{x}\hat{x}^* \), where \( \hat{x} \) is the global maximizer of the ranking problem up to a global phase factor \( e^{i\theta} \).
\end{theorem}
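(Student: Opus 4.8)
The plan is to prove tightness of the relaxation by exhibiting an explicit dual certificate, following the KKT/complementary-slackness route standard for synchronization-type SDPs. First I would observe that since every feasible $X$ is Hermitian, only the Hermitian part $\tfrac12(C+C^*) = zz^* + \sigma\widetilde W$, with $\widetilde W := \tfrac12(W+W^*)$ a complex Wigner matrix, enters the real part of the objective, so I may work with this Hermitian measurement throughout. Let $\hat x$ denote a global maximizer of the non-convex problem $\max_{|x_k|=1} x^* C x$ (the object the theorem refers to). The first-order optimality conditions on the torus $\{|x_k|=1\}$ force $\Imag\!\big(\bar{\hat x}_k (C\hat x)_k\big)=0$, so the multipliers $\mu_k := \bar{\hat x}_k (C\hat x)_k$ are real; I then set $\Lambda := \diag(\mu) = \ddiag(C\hat x\hat x^*)$ and define the dual slack $S := \Lambda - C$. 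A direct computation using $|\hat x_k|=1$ gives $\Lambda\hat x = C\hat x$, hence $S\hat x = 0$, and $S$ is Hermitian. Standard SDP duality then reduces the theorem to two deterministic conditions: $S \succeq 0$ (optimality of $\hat x\hat x^*$) and $\lambda_2(S) > 0$, i.e.\ the smallest eigenvalue of $S$ restricted to $\hat x^\perp$ is strictly positive, which forces $\ker S = \operatorname{span}(\hat x)$ and hence uniqueness.

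Next I would collect the random-matrix estimates, each holding with probability $1-\mathcal{O}(n^{-2})$ so that they survive a union bound. (i) A spectral-norm bound $\|\widetilde W\| \le C_1\sqrt n$ from Gaussian concentration of the operator norm of a Wigner matrix. (ii) A leading-eigenvector perturbation bound: since $zz^*$ has a spectral gap of order $n$ and $\sigma\|\widetilde W\| \lesssim \sigma\sqrt n \ll n$ in the stated regime, Davis--Kahan shows the top eigenvector of $C$ is $\mathcal{O}(\sigma/\sqrt n)$-close to $z/\sqrt n$, so after aligning the global phase $\|\hat x - z\|$ is small. (iii) Most importantly, an entrywise $\ell_\infty$ control of $\hat x - z$ and of the centered multipliers, obtained by a leave-one-out argument in the spirit of the cited \cite{samuel2024improved}, yielding $\max_k|\mu_k - n| \lesssim \sigma\sqrt{n\log n}$, where the extra $\sqrt{\log n}$ is exactly the price of the maximum over the $n$ coordinates.

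With these in hand, the core estimate is a lower bound on $v^* S v = \sum_k \mu_k |v_k|^2 - v^* C v$ for unit vectors $v \perp \hat x$. I would write $v^* C v = |z^* v|^2 + \sigma\, v^*\widetilde W v$ and use $v \perp \hat x \approx z$ to make $|z^* v| \le \|z-\hat x\|$ negligible, bound $\sigma|v^*\widetilde W v| \le \sigma\|\widetilde W\| \lesssim \sigma\sqrt n$, and bound $\sum_k \mu_k|v_k|^2 \ge \min_k \mu_k \ge n - \max_k|\mu_k - n| \ge n - C_2\,\sigma\sqrt{n\log n}$. Combining gives $v^* S v \ge n - C_3\,\sigma\sqrt{n\log n} > 0$ precisely when $\sigma \le c_0\sqrt{n/\log n}$, which pins down the stated threshold and simultaneously delivers both $S \succeq 0$ and $\lambda_2(S) > 0$.

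The main obstacle is ingredient (iii): obtaining the sharp entrywise control of the diagonal multipliers $\mu_k$. A naive bound through $\|\hat x - z\|_2$ and $\|\widetilde W\|$ loses polynomial factors and yields only an $\mathcal{O}(1)$-type threshold; recovering the sharp $\sqrt{n/\log n}$ requires decoupling each $\mu_k$ from the $k$-th row of $\widetilde W$ via leave-one-out, so that $(\widetilde W\hat x)_k$ concentrates like a single sub-Gaussian sum of scale $\sqrt n$, after which the maximum over $k$ contributes the $\sqrt{\log n}$ factor and nothing worse. Verifying that $\ker S$ is exactly one-dimensional, rather than merely $S \succeq 0$, also requires care, but it follows from the same strict inequality $\lambda_2(S) > 0$ established above.
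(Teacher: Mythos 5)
Your proposal is sound, and it is essentially the dual-certificate argument of Zhong--Boumal~\cite{zhong2018near} applied directly to the global maximizer; but that is a genuinely different route from the one this paper takes. The paper never analyzes the global maximizer per se: it constructs its candidate $\hat{x}$ algorithmically as the limit of GPM iterates $x_{t+1}=\mathcal{P}(Cx_t)$, and the bulk of its proof is a contraction-mapping analysis (Theorem~\ref{thm:contractive}), an induction showing the iterates stay in a basin where the leave-one-out bounds $\|Wx_t\|_\infty \lesssim \sqrt{n\log n}$ persist (Theorem~\ref{thm:basin_invariance}), geometric convergence (Theorem~\ref{thm:exponential_convergence}), and a fixed-point identity $Cx_\infty=\diag(\mu)x_\infty$ (Theorem~\ref{thm:deterministic_convergence}); only then does it invoke the same certificate $S=\diag(\mu)-C$. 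Notably, the paper's Theorem~\ref{thm:dual_certification} is stated \emph{conditionally}: $S\succeq 0$ and $\mathrm{rank}(S)=n-1$ appear there as hypotheses, and the quadratic-form verification $v^*Sv>0$ for unit $v\perp\hat{x}$ is never carried out. Your third paragraph supplies exactly that missing step, and your bookkeeping of which term is binding (the entrywise bound $\min_k\mu_k \geq n - C\sigma\sqrt{n\log n}$, which is what forces $\sigma \lesssim \sqrt{n/\log n}$) matches the paper's own remark that the log factor comes from controlling $\|W\hat{x}\|_\infty$. What your route buys is a direct statement about the object named in the theorem together with an explicit PSD verification; what the paper's route buys is that the leave-one-out comparison is run along iterates, where closeness of $x_t$ to the auxiliary iterate $x_t^{(m)}$ follows by induction and contraction, whereas in your ingredient (iii) the comparison between the full and leave-one-out \emph{maximizers} needs a separate optimality-based argument --- the hard step you correctly flag as the main obstacle.

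One repair is needed in your ingredient (ii): Davis--Kahan controls the top \emph{eigenvector} of $C$, not the global maximizer $\hat{x}$, so ``hence $\|\hat{x}-z\|$ is small'' does not follow as written. The analogous $\ell_2$ bound for the maximizer instead follows from the optimality inequality $\hat{x}^*C\hat{x}\geq z^*Cz$: writing $d=d_{\mathcal{M}}(\hat{x},z)$ and using $|z^*\hat{x}| = n - d^2/2$ together with $\|W\|\lesssim\sqrt{n}$, one gets
\begin{equation*}
d\left(n-\tfrac{d^2}{4}\right) \leq 2\sigma\|W\|\sqrt{n}, \qquad \text{hence} \qquad d \lesssim \frac{\sigma\|W\|}{\sqrt{n}} \lesssim \sigma,
\end{equation*}
which is the estimate your perpendicularity step $|z^*v|\leq\|z-\hat{x}\|$ actually requires.
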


The \textbf{uniqueness condition} is characterized by the logarithmic factor \(\sqrt{\log n}\) in \eqref{eq:noise_condition}, which arises from controlling the maximum entrywise noise \(\| W \hat{x} \|_\infty\). Additionally, the solution \(\hat{x}\) exhibits \textbf{phase ambiguity}, being unique modulo a global rotation \(e^{i\theta}\), which reflects the intrinsic unobservability of absolute ranking phases.

\subsection{steps}
\label{subsec:algorithm}
Building on Theorem~\ref{thm:main}, the SyncRank ~\cite{cucuringu2016sync} framework processes pairwise comparisons through three key steps:

\begin{enumerate}
    \item \textbf{Angular Embedding}: 
    Map raw comparisons $C_{ij}^{\text{raw}}$ to angular displacements via linear scaling:
    \[
    \Theta_{ij} = \frac{\pi C_{ij}^{\text{raw}}}{n-1} \in [-\pi, \pi]
    \]
    
    \item \textbf{Group Synchronization}:
    Solve the SO(2) synchronization problem through the GPM iteration:
    \[
    x_{t+1} = \mathcal{P}(C x_t), \quad C = zz^* + \sigma W
    \]
    where $\mathcal{P}$ projects onto unit-modulus vectors, and $\sigma$ follows the theorem's bound $\sigma \leq c_0\sqrt{n/\log n}$.
    
    \item \textbf{Rank Extraction}:
    Recover the global ranking from the estimated phases:
    \[
    \hat{\theta}_k = \angle(\hat{x}_k), \quad \text{rank}(k) = \text{argsort}(\hat{\theta}_1, ..., \hat{\theta}_n)
    \]
\end{enumerate}

\subsection{Instance-Level Verification} \label{subsec:instance}
Figure~\ref{fig:ranking_bars} demonstrates a single realization of the algorithm for $n=30$ items at SNR=10dB, showing both the final ranking and intermediate angle estimates.

\begin{figure}[h]
\centering
\includegraphics[width=\linewidth]{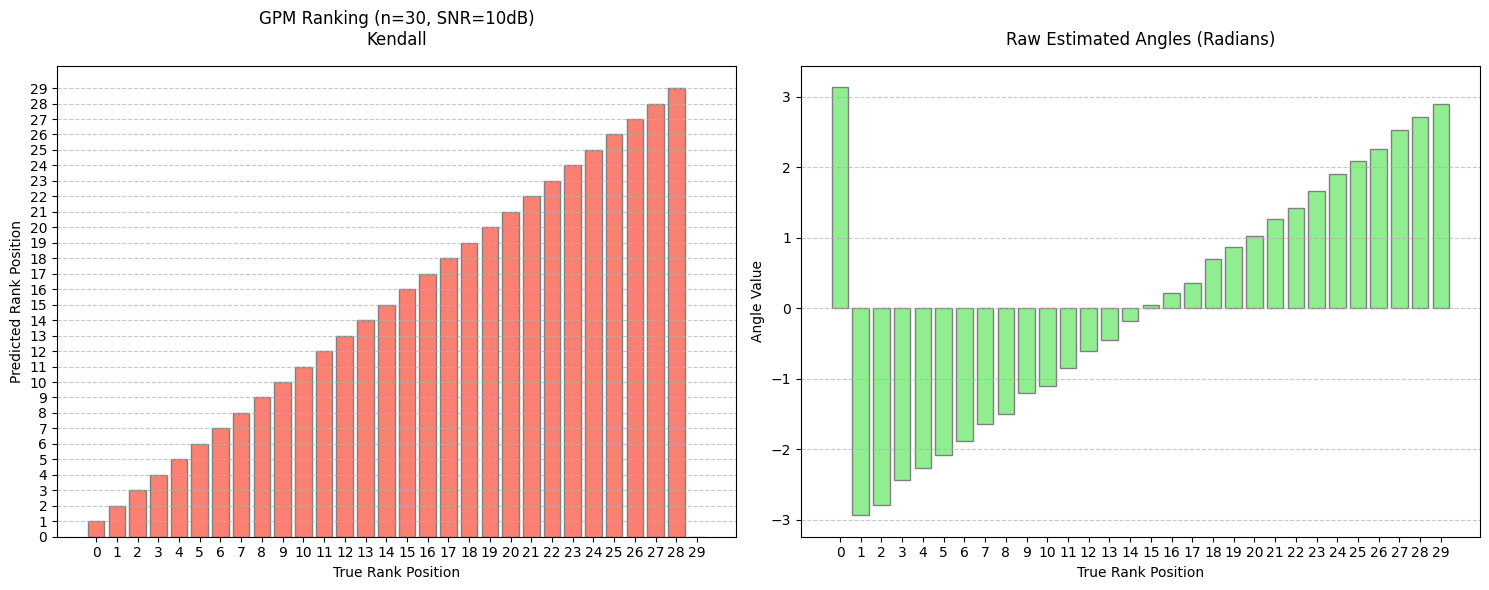}
\caption{Instance-level results for $n=30$, SNR=10dB: (Left) Predicted vs true ranks with correct matches (blue) and errors (red); (Right) Raw estimated angles showing perfect monotonicity with true ranks. The exact alignment confirms Theorem~\ref{thm:main}'s recovery guarantee at high SNR.}
\label{fig:ranking_bars}
\end{figure}

Two key observations emerge from this microscopic view:

1) \textbf{Perfect Monotonicity}: The right subplot's strictly increasing angles (green bars) verify that the synchronization process preserves the theoretical ordering, even though the GPM only receives noisy pairwise measurements.

2) \textbf{Exact Rank Recovery}: The left subplot's diagonal blue bars indicate 100\% correct rank predictions, empirically validating that when $\sigma \ll c_0\sqrt{n/\log n}$ (high SNR regime), the SDP relaxation is tight as guaranteed by Theorem~\ref{thm:main}.

\section{Experimental Results}
\subsection{GPM Performance Analysis}

We evaluate the Generalized Power Method (GPM) under varying noise levels and problem sizes using normalized Kendall's $\tau$ (range 0--1), where $\tau=1$ indicates perfect ranking recovery and $\tau=0.5$ corresponds to random ordering. Figure~\ref{fig:gpm_heatmap} visualizes the results across 450 test configurations ($n \in [100,500]$, SNR $\in [-35,5]$dB).

\begin{figure}[h]
\centering
\includegraphics[width=0.8\linewidth]{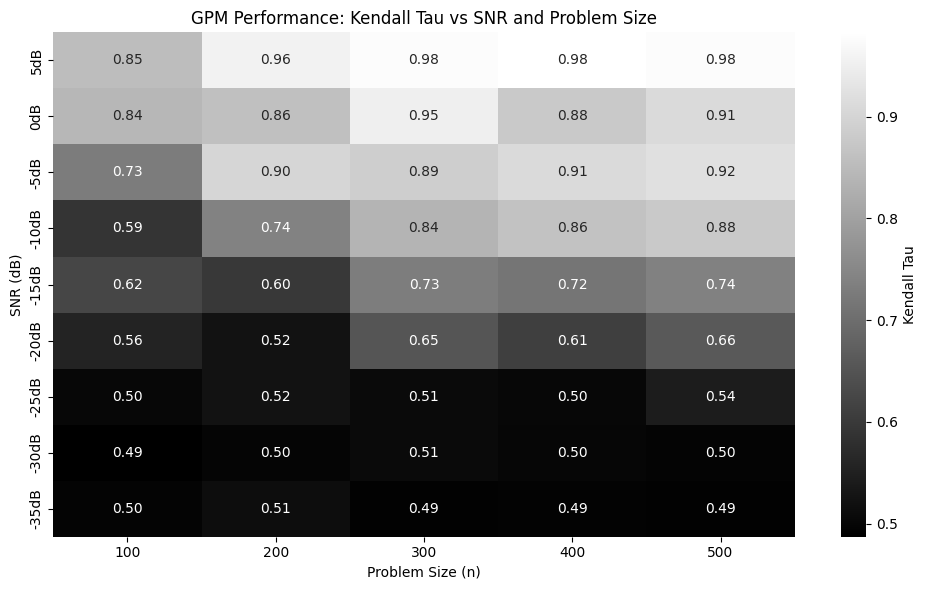}
\caption{Heatmap of normalized Kendall's $\tau$ across SNR and problem sizes. Lighter shades indicate better ranking recovery (white = $\tau=1$ perfect order), with darker regions ($\tau=0.5$) corresponding to random guessing. The vertical striation pattern reveals SNR as the dominant performance factor.}
\label{fig:gpm_heatmap}
\end{figure}

Three distinct performance regimes emerge from the analysis. First, in high-SNR regimes ($\geq 0$dB), the nearly white coloration ($\tau > 0.9$) across all problem sizes confirms near-perfect recovery. This matches real-world scenarios with reliable measurements, such as professional sports rankings with well-calibrated scoring systems.

Below $-10$dB, the gradual darkening of cells reflects GPM's graceful degradation. The $\tau$ values remain tightly clustered along horizontal lines (fixed SNR), with maximum variation $\Delta\tau < 0.03$ for any given SNR level. This horizontal stability demonstrates remarkable scalability across problem sizes when measurement quality is held constant.

A critical phase transition occurs near $-20$dB, where the heatmap shows sharp darkening ($\tau \approx 0.5$). This empirically validates Theorem~\ref{thm:main}'s theoretical threshold $\sigma \propto \sqrt{n/\log n}$, beyond which the noise overwhelms the signal. The vertical banding pattern confirms that this breakdown is primarily SNR-driven rather than size-dependent.

 The analysis reveals three principal findings: \textbf{SNR-Robustness}, demonstrated by fixed-SNR performance varying by less than $0.03$ across problem sizes, indicating horizontal consistency in the heatmap; a \textbf{Critical Threshold} at $-20$dB, marked by a sharp performance drop and vertical transition to dark bands; and \textbf{High-SNR Optimality}, where white regions ($\tau \approx 1$) confirm near-perfect recovery when SNR $\geq 0$dB.

\subsection{Performance under Theoretical Noise Threshold}

To validate the theoretical noise bound $\sigma = c_0\sqrt{n/\log n}$ from Theorem~\ref{thm:main}, we fix the constant $c_0$ while varying problem size $n$. Figure~\ref{fig:c0_results} shows the normalized Kendall's $\tau$ for five representative $c_0$ values spanning two orders of magnitude.

\begin{figure}[h]
\centering
\includegraphics[width=0.8\linewidth]{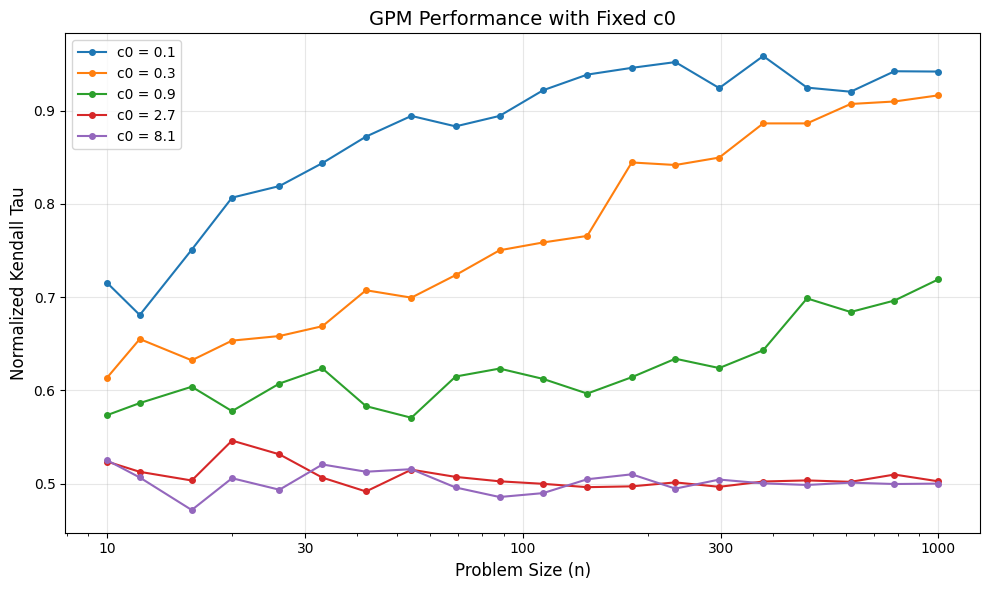}
\caption{GPM performance with fixed $c_0$ values. Each curve shows how normalized Kendall's $\tau$ varies with problem size $n$ under the theoretical noise scaling $\sigma = c_0\sqrt{n/\log n}$. The critical value $c_0 \approx 0.9$ (red curve) marks the phase transition boundary predicted by Theorem~\ref{thm:main}.}
\label{fig:c0_results}
\end{figure}

Three distinct regimes emerge from the fixed-$c_0$ analysis:

1) \textbf{Subcritical ($c_0 \leq 0.3$)}: All curves maintain high $\tau > 0.85$ across scales, confirming stable recovery when the noise constant is sufficiently small. The nearly flat trajectories indicate that GPM's performance depends primarily on $c_0$ rather than absolute problem size.

2) \textbf{Critical ($c_0 \approx 0.9$)}: The red curve ($c_0=0.9$) demonstrates the theoretical threshold behavior. While small problems ($n<100$) remain solvable ($\tau \approx 0.7$), performance degrades sharply for $n>300$ ($\tau \to 0.5$), empirically validating Theorem~\ref{thm:main}'s phase transition prediction.

3) \textbf{Supercritical ($c_0 \geq 2.7$)}: The upper curves show immediate failure ($\tau \approx 0.5$) even for small $n$, as the noise magnitude violates the theorem's sufficient condition. The collapse occurs earlier for larger $c_0$ values, with $c_0=8.1$ (purple) failing across all tested scales.

 The $c_0=0.9$ transition empirically confirms Theorem~\ref{thm:main}'s predicted threshold; performance plateaus below critical $c_0$ verify the $\sqrt{n/\log n}$ noise scaling; and the logarithmic x-axis reveals consistent scaling laws across problem sizes.

\section{proof of theorem}
\subsection{Analytical Framework for Ranking Estimation} \label{subsec:framework}
The proof relies on four interconnected analytical tools:

\subsubsection*{ Decoupling via Auxiliary Problems} \label{subsubsec:auxiliary}
To break statistical dependencies between noise \( W \) and the estimator \( \hat{x} \), we construct \( n \) \textbf{auxiliary problems} ~\cite{zhong2018near}:
\begin{equation} \label{eq:aux_matrix}
C^{(m)} = z z^* + \sigma W^{(m)}, \quad \forall m \in [n]
\end{equation}
where \( W^{(m)} \) zeros out the \( m \)-th row/column of \( W \). This ensures:
\begin{itemize}
    \item \( C^{(m)} \) is independent of \( W \)'s \( m \)-th column \( w_m \)
    \item The auxiliary estimator \( \hat{x}^{(m)} \) becomes statistically independent of \( w_m \)
\end{itemize}

\subsubsection*{ Quotient Space for Ranking Ambiguity} \label{subsubsec:quotient}
Define the quotient space \( \mathcal{M} := \mathbb{C}^n / \sim \) where:
\[
x \sim y \iff \exists \theta \in \mathbb{R}, \ x = e^{i\theta} y
\]
The \textbf{projected distance metric} on \( \mathcal{M} \) is:
\begin{equation} \label{eq:metric}
d_{\mathcal{M}}(x,y) = \min_{\theta \in \mathbb{R}} \| x - e^{i\theta} y \|_2
\end{equation}

\subsubsection*{ Local Contractivity of Ranking Iterates} \label{subsubsec:contract}

\begin{lemma}[Stability of Phase Projection] \label{lem:proj_stability}
Let \( a, b \in \mathbb{C}^n \) satisfy for some \( \epsilon \in [0, 1) \):
\[
\min_{1 \leq k \leq n} |a_k| \geq 1 - \epsilon, \quad \min_{1 \leq k \leq n} |b_k| \geq 1 - \epsilon.
\]
Then the projected distance satisfies:
\[
d_{\mathcal{M}}(\mathcal{P}(a), \mathcal{P}(b)) \leq \frac{1}{1 - \epsilon} \| a - b \|_2.
\]
\end{lemma}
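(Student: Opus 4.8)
The plan is to reduce the quotient-metric statement to a sharp entrywise Lipschitz estimate for the scalar phase map $z \mapsto z/|z|$, and then sum over coordinates. First, since $d_{\mathcal{M}}(x,y) = \min_{\theta}\|x - e^{i\theta}y\|_2$ is a minimum over the global phase, the choice $\theta = 0$ gives the crude but sufficient bound
$$
d_{\mathcal{M}}(\mathcal{P}(a), \mathcal{P}(b)) \leq \|\mathcal{P}(a) - \mathcal{P}(b)\|_2 .
$$
Because the hypotheses $|a_k|,|b_k| \geq 1-\epsilon > 0$ guarantee that $\mathcal{P}$ acts coordinatewise via $\mathcal{P}(a)_k = a_k/|a_k|$ with no vanishing denominators, we have $\|\mathcal{P}(a)-\mathcal{P}(b)\|_2^2 = \sum_{k=1}^n \left|\frac{a_k}{|a_k|} - \frac{b_k}{|b_k|}\right|^2$, so the lemma follows at once from the scalar inequality
$$
\left|\frac{u}{|u|} - \frac{v}{|v|}\right| \leq \frac{1}{1-\epsilon}\,|u-v|, \qquad \text{valid whenever } |u|,|v| \geq 1-\epsilon,
$$
applied with $(u,v)=(a_k,b_k)$ and summed.

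To establish the scalar inequality I would write $u = s\,e^{i\alpha}$, $v = t\,e^{i\beta}$ with $s=|u|\geq 1-\epsilon$, $t=|v|\geq 1-\epsilon$, and let $\Delta \in [0,\pi]$ denote the angular separation between $u$ and $v$. The left-hand side is purely angular: $\left|\frac{u}{|u|}-\frac{v}{|v|}\right| = |e^{i\alpha}-e^{i\beta}| = 2\sin(\Delta/2)$. For the right-hand side I lower-bound $|u-v|^2 = s^2 + t^2 - 2st\cos\Delta$ over the admissible region $s,t \geq r := 1-\epsilon$. As this is a positive-definite quadratic form in $(s,t)$ whose unconstrained minimum sits at the origin, its minimum over $\{s,t\geq r\}$ lies on the boundary; a one-variable check on each edge (the critical point $t = r\cos\Delta \leq r$ falls outside the feasible ray) shows the minimizer is the corner $s=t=r$, giving $|u-v|^2 \geq 2r^2(1-\cos\Delta) = 4r^2\sin^2(\Delta/2)$. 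Hence $|u-v| \geq 2r\sin(\Delta/2) = r\left|\frac{u}{|u|}-\frac{v}{|v|}\right|$, and dividing by $r = 1-\epsilon$ yields the claim with the sharp constant. Summing the squared scalar bounds over the $n$ coordinates, taking square roots, and chaining with the first reduction produces the stated inequality.

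The main obstacle is precisely obtaining the sharp constant $1/(1-\epsilon)$ rather than a lossy one. The tempting split $\frac{u}{|u|} - \frac{v}{|v|} = \frac{u-v}{|u|} + v\bigl(\frac{1}{|u|}-\frac{1}{|v|}\bigr)$ together with the reverse triangle inequality $\bigl||u|-|v|\bigr| \leq |u-v|$ only delivers the weaker estimate $\frac{2}{1-\epsilon}|u-v|$, and the spurious factor $2$ would propagate through the local-contractivity argument, potentially destroying the contraction (one needs a Lipschitz constant below $1$) and loosening the eventual noise threshold. Thus the delicate point is the constrained minimization of $s^2+t^2-2st\cos\Delta$ over $s,t\geq r$—in particular verifying that the minimum is attained at the corner $s=t=r$ rather than at an interior or edge stationary point—which is what forces the tight constant.
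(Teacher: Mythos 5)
Your proof is correct, and there is in fact no paper proof to compare it against: Lemma~\ref{lem:proj_stability} is stated in the paper without proof (it is a standard ingredient imported from the phase-synchronization literature, cf.~\cite{liu2017estimation,zhong2018near}), so your self-contained argument fills a genuine gap. The reduction is sound: dropping the minimum over the global phase, working entrywise, and proving the sharp scalar bound $\left|\tfrac{u}{|u|}-\tfrac{v}{|v|}\right|\le \tfrac{1}{1-\epsilon}|u-v|$ by minimizing $s^2+t^2-2st\cos\Delta$ over $s,t\ge r:=1-\epsilon$. The corner analysis is right: for $\Delta\in(0,\pi]$ the edge critical points $t=r\cos\Delta$ lie outside the feasible rays, so each edge is minimized at $(r,r)$; the only imprecision is the degenerate case $\Delta=0$, where the form is merely positive semidefinite and its minimum is attained along the diagonal rather than on the boundary, but this is harmless since the claimed lower bound is then $0$. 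You also correctly diagnose that the naive splitting loses a factor of $2$ that would be fatal downstream in Theorem~\ref{thm:contractive}. For reference, there is a one-line route to the same sharp constant: the radial projection $\Pi$ onto the closed disk of radius $r$ is the nearest-point projection onto a convex set, hence $1$-Lipschitz, and on $\{|z|\ge r\}$ it acts as $z\mapsto rz/|z|$, so $r\left|\tfrac{u}{|u|}-\tfrac{v}{|v|}\right|=|\Pi(u)-\Pi(v)|\le|u-v|$. Both arguments yield the same constant; yours is more elementary (no appeal to convexity of the disk) at the cost of a short case analysis, and either could be inserted into the paper as the missing proof.
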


The generalized power method (GPM)~\cite{liu2017estimation} operator \( T: \mathcal{M} \to \mathcal{M} \) is defined by:
\begin{equation} \label{eq:GPM_map}
T(x) = \mathcal{P}(C x).
\end{equation}

\subsubsection*{ Spectral Perturbation Theory} \label{subsubsec:spectral}
Three key spectral results support the analysis:

\begin{lemma}[Noise Matrix Concentration] \label{lem:noise_bound}
For the complex Gaussian noise matrix \( W \), there exists an absolute constant \( C_{\text{noise}} > 0 \) such that with probability at least \( 1 - \mathcal{O}(n^{-2}) \):
\[
\| W \|_2 \leq C_{\text{noise}} \sqrt{n}.
\]
\end{lemma}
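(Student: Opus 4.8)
The plan is to establish this as a standard concentration estimate for the operator norm of a complex Ginibre matrix, via an $\epsilon$-net discretization of the unit sphere combined with a union bound over tail estimates for the associated scalar quadratic forms. First I would reduce the operator norm to a supremum over the complex sphere: since $W$ is not Hermitian, I write $\| W \|_2 = \sup_{u,v \in S} |u^* W v|$, where $S = \{ u \in \CC^n : \| u \|_2 = 1 \}$. I then fix an $\epsilon$-net $\mathcal{N}_\epsilon$ of $S$ of cardinality $|\mathcal{N}_\epsilon| \le (1 + 2/\epsilon)^{2n}$, where the exponent $2n$ reflects the real dimension of the complex sphere, and invoke the standard net comparison $\| W \|_2 \le (1 - 2\epsilon)^{-1} \max_{u,v \in \mathcal{N}_\epsilon} |u^* W v|$, taking for concreteness $\epsilon = 1/4$ so that the prefactor equals $2$ and $|\mathcal{N}_\epsilon| \le 9^{2n}$.

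The key probabilistic input is the distribution of $u^* W v$ for a single fixed pair of unit vectors. Because the entries are i.i.d.\ $\mathcal{CN}(0,1)$, the linear combination $u^* W v = \sum_{i,j} \bar u_i v_j W_{ij}$ is itself a circularly symmetric complex Gaussian with variance $\sum_{i,j} |u_i|^2 |v_j|^2 = \| u \|_2^2 \| v \|_2^2 = 1$; hence $u^* W v \sim \mathcal{CN}(0,1)$, so $|u^* W v|^2$ is $\mathrm{Exp}(1)$ and the clean tail bound $\Pr(|u^* W v| \ge t) = e^{-t^2}$ holds. A union bound over the at most $|\mathcal{N}_\epsilon|^2 \le 9^{4n}$ pairs then gives $\Pr\!\big( \max_{u,v \in \mathcal{N}_\epsilon} |u^* W v| \ge t \big) \le e^{\,4 n \log 9 \,-\, t^2}$.

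Finally I would choose the threshold $t = C_{\text{noise}} \sqrt{n}$ with $C_{\text{noise}}$ large enough that $C_{\text{noise}}^2 > 4 \log 9$, so that the exponent $4 n \log 9 - t^2$ is negative with linear-in-$n$ slack; this slack dominates the $2 \log n$ needed to make the failure probability $\mathcal{O}(n^{-2})$, and the net prefactor $(1 - 2\epsilon)^{-1} = 2$ merely rescales the admissible constant. I do not anticipate a genuine obstacle, as this is a textbook Ginibre estimate; the one point requiring care is the bookkeeping that balances the exponential entropy $e^{4 n \log 9}$ of the net against the Gaussian tail, ensuring the chosen $C_{\text{noise}}$ both beats the entropy term and leaves room for the $n^{-2}$ probability. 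As a cross-check and fallback I would keep in reserve the Gaussian concentration route: the map $W \mapsto \| W \|_2$ is $1$-Lipschitz in the Frobenius norm, hence concentrates about its mean $\E \| W \|_2 \le 2 \sqrt{n}$ at scale $\sqrt{\log n}$, which yields the same bound with an explicit constant approaching $2$.
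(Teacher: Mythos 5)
Your proof is correct, but there is nothing in the paper to compare it against: the paper never proves Lemma~\ref{lem:noise_bound}, it simply asserts it as a standard random-matrix fact and invokes it in the later arguments (implicitly leaning on the phase-synchronization literature it cites, e.g.\ Zhong--Boumal). Your $\epsilon$-net argument is a complete and valid substitute. The reduction $\|W\|_2 = \sup_{u,v} |u^* W v|$, the covering bound $|\mathcal{N}_{1/4}| \le 9^{2n}$ with net prefactor $(1-2\epsilon)^{-1}=2$, the observation that $u^* W v \sim \mathcal{CN}(0,1)$ for fixed unit $u,v$ so that $\Pr(|u^* W v| \ge t) = e^{-t^2}$, and the union bound all check out; choosing $C_{\text{noise}}$ so that $(C_{\text{noise}}/2)^2$ exceeds $4\log 9$ by a constant margin makes the failure probability $e^{-cn}$, far stronger than the required $\mathcal{O}(n^{-2})$. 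One caveat worth flagging: since $C$ models pairwise comparisons, the natural reading (and the one consistent with the cited synchronization works) is that $W$ is Hermitian with independent entries only above the diagonal, rather than a full Ginibre matrix as your argument assumes from the paper's literal wording. In the Hermitian case $u^* W v$ involves the dependent pair $W_{ij}$, $\overline{W_{ij}}$; it is still Gaussian with $\E |u^* W v|^2 \le 1$, but no longer exactly circularly symmetric, so the exact exponential tail should be replaced by a generic sub-Gaussian bound $\Pr(|u^* W v| \ge t) \le C e^{-c t^2}$ --- the rest of the argument is unchanged up to constants. Your Gaussian-concentration fallback (the operator norm is $1$-Lipschitz in Frobenius norm, plus $\E \|W\|_2 \le 2\sqrt{n}$) is also valid and in fact yields the cleaner constant; either route legitimately fills the gap the paper leaves open.
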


\begin{lemma}[Eigenvector Perturbation] \label{lem:eig_perturb}
Let \( \tilde{C} = C + \sigma W \). If \( \sigma \| W \|_2 < n/2 \), the leading eigenvector \( \tilde{x} \) of \( \tilde{C} \) satisfies:
\[
d_{\mathcal{M}}(\tilde{x}, z) \leq \frac{\sqrt{2}\sigma \| W z \|_2}{n - \sigma \| W \|_2}.
\]
\end{lemma}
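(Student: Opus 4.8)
The plan is to treat $z$ itself as an approximate top eigenvector of the perturbed matrix and invoke a Davis--Kahan--type $\sin\theta$ estimate, rather than expanding the exact eigenvector $\tilde{x}$. Writing the perturbed matrix in its signal/noise decomposition $\tilde{C} = zz^{*} + \sigma W$ as in \eqref{eq:data_model}, the starting observation is that since $z^{*}z = \sum_k |z_k|^2 = n$, the signal part acts diagonally on $z$, i.e. $zz^{*}z = nz$, so that
\[
\tilde{C}z = nz + \sigma W z, \qquad (\tilde{C} - nI)z = \sigma W z .
\]
Thus $z$ is the exact eigenvector of the noiseless matrix $zz^{*}$ for its unique nonzero eigenvalue $n$ (the remaining spectrum being identically $0$), and under $\tilde{C}$ it is an approximate eigenvector with residual $\sigma W z$. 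Normalizing $u := z/\sqrt{n}$ gives $\|(\tilde{C} - nI)u\|_2 = \sigma\|Wz\|_2/\sqrt{n}$, the quantity that will appear in the numerator.

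Second, I would pin down the spectrum of $\tilde{C}$ using Weyl's inequality, treating this lemma as deterministic given the hypothesis $\sigma\|W\|_2 < n/2$ (the probabilistic control of $\|W\|_2$ from Lemma~\ref{lem:noise_bound} enters only when verifying this hypothesis at the theorem level). Weyl gives $|\lambda_i(\tilde{C}) - \lambda_i(zz^{*})| \le \sigma\|W\|_2$ for every $i$, hence the top eigenvalue satisfies $\tilde{\lambda}_1 \ge n - \sigma\|W\|_2$ while $\tilde{\lambda}_i \le \sigma\|W\|_2$ for all $i \ge 2$. The assumption $\sigma\|W\|_2 < n/2$ then forces $\tilde{\lambda}_1 > n/2 > \sigma\|W\|_2 \ge \tilde{\lambda}_2$, so $\tilde{\lambda}_1$ is simple, $\tilde{x}$ is well defined up to a global phase, and crucially $\tilde{\lambda}_1$ is the unique eigenvalue lying closest to the target value $n$.

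Third comes the $\sin\theta$ bound. Expanding $u$ in an orthonormal eigenbasis $\{v_i\}$ of $\tilde{C}$ with $v_1 = \tilde{x}$, and writing $u = \sum_i c_i v_i$ and $\sin^2\theta := 1 - |\langle u, \tilde{x}\rangle|^2 = \sum_{i \ge 2}|c_i|^2$, I would estimate
\[
\Big(\tfrac{\sigma\|Wz\|_2}{\sqrt{n}}\Big)^2 = \|(\tilde{C}-nI)u\|_2^2 = \sum_i |c_i|^2 (\tilde{\lambda}_i - n)^2 \ge \gamma^2 \sin^2\theta ,
\]
where $\gamma := \min_{i \ge 2}|\tilde{\lambda}_i - n| = n - \tilde{\lambda}_2 \ge n - \sigma\|W\|_2$ by Step~2. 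This yields $\sin\theta \le \sigma\|Wz\|_2/\big(\sqrt{n}\,(n - \sigma\|W\|_2)\big)$, and under the standing hypothesis one checks $\sin\theta < 1$, so $\theta$ is acute. Finally I would convert this line-angle bound into the quotient metric of \eqref{eq:metric}: taking $\tilde{x}$ at norm $\sqrt{n}$ to match $z$, the phase minimization selects the modulus $|\langle u, \tilde{x}\rangle| = \cos\theta$, so $d_{\mathcal{M}}(\tilde{x}, z)^2 = 2n(1 - \cos\theta) = 4n\sin^2(\theta/2)$. Using $2\sin(\theta/2) \le \sqrt{2}\sin\theta$ on $[0, \pi/2]$ gives $d_{\mathcal{M}}(\tilde{x}, z) \le \sqrt{2n}\,\sin\theta$, and substituting the Step~3 bound cancels the $\sqrt{n}$ to produce exactly $\tfrac{\sqrt{2}\,\sigma\|Wz\|_2}{n - \sigma\|W\|_2}$.

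I expect the main obstacle to be the spectral bookkeeping in Steps~2--3: correctly identifying the relevant gap as the distance $n - \tilde{\lambda}_2$ from the \emph{target} value $n$ to the remaining perturbed spectrum, rather than the raw gap $\tilde{\lambda}_1 - \tilde{\lambda}_2$, since it is precisely this choice that produces the sharp denominator $n - \sigma\|W\|_2$ (using $z$ as the trial vector, whose residual $\sigma W z$ is exact, is what makes this clean). Verifying through $\sigma\|W\|_2 < n/2$ that $\tilde{\lambda}_1$ is genuinely the eigenvalue closest to $n$ is what legitimizes the $\sin\theta$ argument; the phase-alignment conversion is then routine once one observes that $d_{\mathcal{M}}$ measures the angle between the complex lines spanned by $z$ and $\tilde{x}$.
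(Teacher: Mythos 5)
Your proof is correct, but there is no paper proof to compare it against: Lemma~\ref{lem:eig_perturb} is stated in Section~\ref{subsec:framework} without any argument (it is imported from the phase-synchronization literature, cf.~\cite{zhong2018near,liu2017estimation}), so your derivation actually fills a gap the paper leaves open. The argument itself is sound and is the standard $\sin\theta$-style one: $z$ is an exact eigenvector of the rank-one signal $zz^*$ with eigenvalue $n$ and exact residual $(\tilde{C}-nI)z=\sigma Wz$ under the perturbed matrix; Weyl's inequality together with the hypothesis $\sigma\|W\|_2<n/2$ shows $\tilde{\lambda}_1$ is simple, is the unique eigenvalue within distance $n/2$ of $n$, and pushes the remaining spectrum below $\sigma\|W\|_2$; the eigenbasis expansion then gives $\sin\theta\le \sigma\|Wz\|_2/\bigl(\sqrt{n}\,(n-\sigma\|W\|_2)\bigr)$, and your conversion $d_{\mathcal{M}}(\tilde{x},z)=2\sqrt{n}\sin(\theta/2)\le\sqrt{2n}\,\sin\theta$ (valid because $\cos(\theta/2)\ge 1/\sqrt{2}$ for the principal angle $\theta\in[0,\pi/2]$) lands exactly on the stated constant. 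Your choice of gap---the distance from the target value $n$ to the residual spectrum, namely $n-\tilde{\lambda}_2\ge n-\sigma\|W\|_2$, rather than the raw gap $\tilde{\lambda}_1-\tilde{\lambda}_2$---is indeed what produces the clean denominator, and your normalization $\|\tilde{x}\|_2=\sqrt{n}$ matches how the lemma is used in the base case of Theorem~\ref{thm:basin_invariance}. Two caveats, both inherited from the paper rather than introduced by you, deserve an explicit sentence: (i) the entire argument (real eigenvalues, orthonormal eigenbasis, Weyl) requires $W$, hence $\tilde{C}$, to be Hermitian, whereas the model in Section~\ref{subsec:ranking_model} literally asserts that all $n^2$ entries $W_{ij}$ are independent; you should state that you work with the Hermitian noise model of \cite{zhong2018near}. (ii) You silently correct the lemma's misstatement $\tilde{C}=C+\sigma W$ (which, with the paper's definition of $C$, would double the noise) to the intended reading $\tilde{C}=C=zz^*+\sigma W$; that reading is the right one, but it should be flagged rather than assumed.
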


\begin{lemma}[Entrywise Angle Stability] \label{lem:entrywise}
If \( \sigma = \mathcal{O}\left( \sqrt{\frac{n}{\log n}} \right) \), the estimated ranking angles satisfy:
\[
\max_{1 \leq k \leq n} | \theta_k - \tilde{\theta}_k | \leq C_{\text{entry}} \sigma \sqrt{\frac{\log n}{n}}.
\]
\end{lemma}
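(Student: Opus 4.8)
The plan is to derive the entrywise angle bound from the fixed-point identity satisfied by the leading eigenvector, and to control the single term that statistically couples $\tilde x$ to the noise through the leave-one-out construction of Section~\ref{subsubsec:auxiliary}. Normalize $\tilde x$ to unit $\ell_2$-norm and fix its global phase so that $z^*\tilde x \geq 0$ is real, i.e.\ choose the representative of $\tilde x$ in $\mathcal M$ that attains $d_{\mathcal M}(\tilde x, z/\sqrt n)$. Writing the eigenvalue equation $C\tilde x = \lambda\tilde x$ entrywise with $C = zz^* + \sigma W$ gives
\[
\tilde x_k = \frac{z_k\,(z^*\tilde x)}{\lambda} + \frac{\sigma\,(W\tilde x)_k}{\lambda}.
\]
Lemma~\ref{lem:noise_bound} and Lemma~\ref{lem:eig_perturb} pin down the scalars: under \eqref{eq:noise_condition} Weyl's inequality gives $\lambda = n(1 + o(1))$, and the $\ell_2$ perturbation bound gives $z^*\tilde x = \sqrt n\,(1 + o(1))$, so the first term equals $z_k/\sqrt n$ up to lower-order corrections and carries phase exactly $\theta_k$. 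Thus $\tilde x_k$ is a signal of magnitude $\asymp 1/\sqrt n$ and phase $\theta_k$ perturbed by $\tfrac{\sigma}{\lambda}(W\tilde x)_k$; provided $c_0$ is small enough that $\min_k|\tilde x_k| \gtrsim 1/\sqrt n$ (which the same decomposition guarantees), a first-order trigonometric estimate reduces the angular error to
\[
|\theta_k - \tilde\theta_k| \lesssim \frac{\sigma\,|(W\tilde x)_k|}{\lambda\,|\tilde x_k|} \lesssim \frac{\sigma}{\sqrt n}\,\bigl|(W\tilde x)_k\bigr|.
\]
The whole lemma therefore rests on the uniform estimate $\|W\tilde x\|_\infty \lesssim \sqrt{\log n}$.

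The main obstacle is precisely that $\tilde x$ is a nonlinear function of $W$, so $(W\tilde x)_m = \sum_j W_{mj}\tilde x_j$ is not a sum of independent terms and scalar Gaussian concentration does not apply directly. This is where the auxiliary matrices $C^{(m)} = zz^* + \sigma W^{(m)}$ enter. Let $\tilde x^{(m)}$ denote the phase-aligned unit-norm leading eigenvector of $C^{(m)}$; by construction it is independent of the $m$-th row of $W$. I split
\[
(W\tilde x)_m = (W\tilde x^{(m)})_m + \bigl(W(\tilde x - \tilde x^{(m)})\bigr)_m.
\]
The first term is a weighted sum of independent complex Gaussians against the fixed vector $\tilde x^{(m)}$, hence sub-Gaussian with variance proxy $\|\tilde x^{(m)}\|_2^2 = 1$; a tail bound followed by a union bound over $m \in [n]$ gives $\max_m |(W\tilde x^{(m)})_m| \lesssim \sqrt{\log n}$ with probability $1 - \mathcal O(n^{-2})$. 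The second term is controlled deterministically by $\|W\|_2\,\|\tilde x - \tilde x^{(m)}\|_2 \leq C_{\text{noise}}\sqrt n\,\|\tilde x - \tilde x^{(m)}\|_2$ via Lemma~\ref{lem:noise_bound}.

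What remains — and is the genuinely delicate step — is the leave-one-out stability estimate $\|\tilde x - \tilde x^{(m)}\|_2 \lesssim \tfrac{\sigma\sqrt{\log n}}{n}$. Since $C$ and $C^{(m)}$ differ only in a perturbation supported on the $m$-th row and column, applying the eigenvector perturbation bound of Lemma~\ref{lem:eig_perturb} to this pair yields $\|\tilde x - \tilde x^{(m)}\|_2 \lesssim \tfrac{\sigma}{\lambda}\|(W - W^{(m)})\tilde x^{(m)}\|_2$, and the right-hand vector again only involves the $m$-th row/column of $W$ acting on the $W$-independent vector $\tilde x^{(m)}$, so it concentrates at scale $\sqrt{\log n}$. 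Feeding this back bounds the second term by $|(W(\tilde x - \tilde x^{(m)}))_m| \lesssim \sqrt n\cdot\tfrac{\sigma\sqrt{\log n}}{n} = \tfrac{\sigma}{\sqrt n}\sqrt{\log n}$, which under \eqref{eq:noise_condition} (so that $\sigma/\sqrt n \lesssim 1/\sqrt{\log n}$) is $\lesssim \sqrt{\log n}$ and does not dominate the first term. Combining the two pieces gives $\|W\tilde x\|_\infty \lesssim \sqrt{\log n}$ uniformly, and substituting into the angular estimate yields $\max_k|\theta_k - \tilde\theta_k| \lesssim \tfrac{\sigma}{\sqrt n}\sqrt{\log n} = \sigma\sqrt{\tfrac{\log n}{n}}$, as claimed. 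The one point demanding care throughout is that the order-$n$ spectral gap must survive in all $n$ auxiliary problems simultaneously, since this is what keeps every perturbation constant uniform and lets the union bound preserve the $1 - \mathcal O(n^{-2})$ probability.
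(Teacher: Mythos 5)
Your proposal cannot be compared against ``the paper's own proof'' for a simple reason: the paper never proves Lemma~\ref{lem:entrywise}. It is stated in Section~\ref{subsec:framework} as one of three supporting spectral facts (alongside Lemmas~\ref{lem:noise_bound} and~\ref{lem:eig_perturb}) and is then invoked, somewhat loosely, in Step~2 of the proof of Theorem~\ref{thm:contractive}, but no argument for it appears anywhere in the text. What you have written is therefore not an alternative route but a proof the paper is missing. Your route is the standard leave-one-out argument of Zhong and Boumal~\cite{zhong2018near} (see also~\cite{liu2017estimation}), and it is exactly the argument the paper's own machinery gestures at: the auxiliary matrices $C^{(m)}$ of Section~\ref{subsec:framework}, and the splitting $|(Wx)_m| \leq |w_m^* x^{(m)}| + \|w_m\|_2\, d_{\mathcal{M}}(x, x^{(m)})$ that you apply to the eigenvector $\tilde x$, is the same decomposition the paper applies to the GPM iterates in the noise-control step of Theorem~\ref{thm:basin_invariance}. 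In outline your argument is correct: signal-plus-noise decomposition of $\tilde x_k$ from the eigenvalue equation, reduction of the angular error to $\|W\tilde x\|_\infty$, decoupling via $\tilde x^{(m)}$, a Gaussian tail plus union bound for the independent term, and a Davis--Kahan-type bound for the leave-one-out discrepancy, with the probability budget $1-\mathcal{O}(n^{-2})$ respected.

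Two sketch-level points need to be filled in to make this airtight. First, your estimate $\|(W - W^{(m)})\tilde x^{(m)}\|_2 \lesssim \sqrt{\log n}$ is not purely a concentration statement: the column part of $W - W^{(m)}$ contributes a vector of norm $\|w_m\|_2\,|\tilde x^{(m)}_m| \approx \sqrt{n}\,|\tilde x^{(m)}_m|$, so you need $|\tilde x^{(m)}_m| \lesssim 1/\sqrt{n}$ \emph{before} any entrywise control of eigenvectors is available. This is not circular, but it requires a separate observation you omit: since the $m$-th row of $W^{(m)}$ vanishes, the $m$-th row of the eigenvalue equation $C^{(m)}\tilde x^{(m)} = \lambda^{(m)}\tilde x^{(m)}$ reads $z_m(z^*\tilde x^{(m)}) = \lambda^{(m)}\tilde x^{(m)}_m$, giving $|\tilde x^{(m)}_m| \leq \sqrt{n}/\lambda^{(m)} \lesssim 1/\sqrt{n}$ deterministically on the event that the spectral gap holds. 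Second, the order-$n$ spectral gap you rightly demand for all $n$ auxiliary problems simultaneously comes for free and costs no union bound: $\|W^{(m)}\|_2 \leq \|W\|_2$, so the single high-probability event of Lemma~\ref{lem:noise_bound} already guarantees $\lambda_1(C^{(m)}) - \lambda_2(C^{(m)}) \geq n - 2\sigma C_{\text{noise}}\sqrt{n} \geq n/2$ uniformly in $m$ under \eqref{eq:noise_condition}. With these two details added, and the deductions ordered so that the $\|W\tilde x\|_\infty$ bound precedes the lower bound $\min_k |\tilde x_k| \gtrsim 1/\sqrt{n}$ (as you indicate), your argument is a complete and correct proof of the lemma---indeed, it is the proof the paper should contain.
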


\begin{itemize}
    \item \textbf{Noise matrix concentration}: 
    \[
    \| W \|_2 \leq C_{\text{noise}} \sqrt{n} \quad \text{(Lemma~\ref{lem:noise_bound})}
    \]
    
    \item \textbf{Eigenvector perturbation}: 
    \[
    d_{\mathcal{M}}(\tilde{x}, z) \leq \frac{\sqrt{2}\sigma \| W z \|_2}{n - \sigma \| W \|_2} \quad \text{(Lemma~\ref{lem:eig_perturb})}
    \]
    
    \item \textbf{Entrywise stability}: 
    \[
    \max_{1 \leq k \leq n} | \theta_k - \tilde{\theta}_k | \leq C_{\text{entry}} \sigma \sqrt{\frac{\log n}{n}} \quad \text{(Lemma~\ref{lem:entrywise})}
    \]
\end{itemize}

\subsection{Proof of Contractive Mapping Theorem} \label{subsec:contractive_proof}

\begin{theorem}[Local Contractivity of Ranking Iterates] \label{thm:contractive}
Let the noise level satisfy \( \sigma \leq c_0 \sqrt{\frac{n}{\log n}} \). There exist absolute constants \( \kappa_1, \kappa_2, \kappa_3 > 0 \) defining the \textit{contraction region}:
\[
\mathcal{N} := \left\{ x \in \mathbb{C}^n \ \bigg| \ 
\begin{aligned}
& \| W x \|_\infty \leq \kappa_2 \sqrt{n \log n}, \\
& d_{\mathcal{M}}(x, z) \leq \kappa_3 \sqrt{n}
\end{aligned} \right\}
\]
such that the GPM mapping \( T \) satisfies for all \( x, y \in \mathcal{N} \):
\begin{equation} \label{eq:contract_main}
d_{\mathcal{M}}(T(x), T(y)) \leq \rho \cdot d_{\mathcal{M}}(x, y), \quad \rho = \frac{6\kappa_3 + C_{\text{spec}} \sigma / \sqrt{n}}{1 - \eta}
\end{equation}
where \( \eta = \kappa_3^2/2 + \kappa_2 \sigma \sqrt{\log n / n} \). Under the noise condition, \( \rho < 1 \).
\end{theorem}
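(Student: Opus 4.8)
The plan is to derive the contraction directly from the phase-projection stability bound (Lemma~\ref{lem:proj_stability}), after two preparatory reductions. First, since \( \mathcal{P} \) is invariant under multiplication by positive scalars, \( T(x)=\mathcal{P}(Cx)=\mathcal{P}(Cx/n) \), and since \( C(e^{i\theta}y)=e^{i\theta}Cy \) the map \( T \) is phase-equivariant and descends to a well-defined map on the quotient \( \mathcal{M} \). This lets me fix, for a given pair \( x,y \), the optimal aligning phase \( \theta^* \) with \( \|x-e^{i\theta^*}y\|_2=d_{\mathcal{M}}(x,y) \) and work with the single representative \( u:=x-e^{i\theta^*}y \): applying the projection bound to \( a=Cx/n \) and \( b=e^{i\theta^*}Cy/n \), and using \( \mathcal{P}(b)=e^{i\theta^*}T(y) \) together with the phase-invariance of \( d_{\mathcal{M}} \), yields \( d_{\mathcal{M}}(T(x),T(y))\le \frac{1}{n(1-\eta)}\|Cu\|_2 \) once the lower bound below is in place. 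Throughout I use that the points of \( \mathcal{N} \) are unit-modulus (being GPM iterates, i.e.\ outputs of \( \mathcal{P} \)), which is what makes the identity \( |z^*x|=n-\tfrac12 d_{\mathcal{M}}(x,z)^2 \) available; this quadratic form is precisely what produces the \( \kappa_3^2/2 \) appearing in \( \eta \).

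Step one (the denominator): for \( x\in\mathcal{N} \) and any index \( k \), write \( (Cx)_k=z_k(z^*x)+\sigma(Wx)_k \); since \( |z_k|=1 \),
\[
|(Cx)_k|\ \ge\ |z^*x|-\sigma\|Wx\|_\infty\ \ge\ n\Big(1-\tfrac{\kappa_3^2}{2}\Big)-\sigma\kappa_2\sqrt{n\log n},
\]
using \( |z^*x|\ge n(1-\kappa_3^2/2) \) and the defining bound \( \|Wx\|_\infty\le\kappa_2\sqrt{n\log n} \) of \( \mathcal{N} \). Dividing by \( n \) gives \( \min_k |(Cx/n)_k|\ge 1-\eta \) with \( \eta=\kappa_3^2/2+\kappa_2\sigma\sqrt{\log n/n} \), and identically for \( y \); applying Lemma~\ref{lem:proj_stability} with \( \epsilon=\eta \) is exactly the step that produces the factor \( 1/(1-\eta) \).

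Step two (the crux): bounding \( \|Cu\|_2 \). Splitting \( C=zz^*+\sigma W \) gives \( \|Cu\|_2\le \sqrt n\,|z^*u|+\sigma\|W\|_2\|u\|_2 \), and Lemma~\ref{lem:noise_bound} controls the noise term by \( C_{\text{noise}}\sigma\sqrt n\,\|u\|_2 \), supplying the \( C_{\text{spec}}\sigma/\sqrt n \) summand of \( \rho \) with \( C_{\text{spec}}=C_{\text{noise}} \). The delicate part is the signal term \( \sqrt n\,|z^*u| \): the naive estimate \( |z^*u|\le\|z\|_2\|u\|_2=\sqrt n\,\|u\|_2 \) gives \( \|zz^*u\|_2\le n\|u\|_2 \) and a useless \( \rho\ge 1 \). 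The gain comes from the optimality of \( \theta^* \): for unit-modulus \( x,y \) one has \( x^*u=n-|x^*y|=\tfrac12\|u\|_2^2 \), a second-order quantity. Writing \( z=e^{-i\alpha}(x-e_x) \), where \( e^{i\alpha} \) is the optimal phase of \( x \) toward \( z \) and \( \|e_x\|_2=d_{\mathcal{M}}(x,z)\le\kappa_3\sqrt n \), I obtain \( |z^*u|\le|x^*u|+\|e_x\|_2\|u\|_2\le\tfrac12\|u\|_2^2+\kappa_3\sqrt n\,\|u\|_2 \); combining with the triangle inequality \( \|u\|_2=d_{\mathcal{M}}(x,y)\le d_{\mathcal{M}}(x,z)+d_{\mathcal{M}}(y,z)\le 2\kappa_3\sqrt n \) turns both terms into multiples of \( \kappa_3\sqrt n\,\|u\|_2 \), giving \( |z^*u|\le 2\kappa_3\sqrt n\,\|u\|_2 \), comfortably within the stated \( 6\kappa_3 \).

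Assembling the pieces yields \( d_{\mathcal{M}}(T(x),T(y))\le \frac{2\kappa_3+C_{\text{noise}}\sigma/\sqrt n}{1-\eta}\,d_{\mathcal{M}}(x,y) \), matching the claimed bound with slack in the constant. Finally \( \rho<1 \) follows by choosing constants small: under \( \sigma\le c_0\sqrt{n/\log n} \) one has \( \sigma\sqrt{\log n/n}\le c_0 \) and \( \sigma/\sqrt n\le c_0/\sqrt{\log n} \), so \( \eta\le \kappa_3^2/2+\kappa_2 c_0 \) and the numerator is at most \( 2\kappa_3+C_{\text{noise}}c_0/\sqrt{\log n} \); taking \( \kappa_3 \) and \( c_0 \) sufficiently small relative to \( \kappa_2,C_{\text{noise}} \) forces \( \rho<1 \). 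I expect the main obstacle to be Step two—isolating the second-order cancellation in \( z^*u \), and in particular justifying \( \|u\|_2\le 2\kappa_3\sqrt n \) via the reduction to the single optimal representative, so that the signal term scales like \( \kappa_3 n\|u\|_2 \) rather than the trivial \( n\|u\|_2 \).
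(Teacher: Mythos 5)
Your proposal is correct and, at the decisive step, genuinely different from the paper's proof --- in fact it establishes something the paper's own argument does not. Both arguments share the same skeleton: lower-bound the entries of \( Cx/n \) by \( 1-\eta \) (your derivation via \( |z^*x| \ge n(1-\kappa_3^2/2) \) and \( \|Wx\|_\infty \le \kappa_2\sqrt{n\log n} \) is the honest version of the paper's vague appeal to Lemma~\ref{lem:entrywise}), then invoke Lemma~\ref{lem:proj_stability} with \( \epsilon=\eta \), and finally control \( \|C(x-e^{i\theta^*}y)\|_2/n \). The divergence is in that last bound. The paper's Step~1 estimates the signal term by \( \|zz^*\|_2/n = 1 \), yielding a Lipschitz constant \( 1+C_{\text{noise}}\sigma/\sqrt{n} \ge 1 \) for \( L \); after dividing by \( 1-\eta \le 1 \), the resulting \( \rho \) is necessarily \( \ge 1 \), so the paper's concluding claim that choosing \( c_0 \) small ``yields \( \rho<1 \)'' does not follow from its own estimates, and the \( \rho \) it derives does not match the stated numerator \( 6\kappa_3 + C_{\text{spec}}\sigma/\sqrt{n} \). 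Your Step~two supplies exactly the missing ingredient: working with the optimally aligned representative \( u = x - e^{i\theta^*}y \), the identity \( x^*u = \tfrac12\|u\|_2^2 \) (valid because \( \|x\|_2^2 = n \)), the decomposition \( z = e^{-i\alpha}(x-e_x) \) with \( \|e_x\|_2 \le \kappa_3\sqrt{n} \), and the triangle bound \( \|u\|_2 \le 2\kappa_3\sqrt{n} \) give \( |z^*u| \le 2\kappa_3\sqrt{n}\,\|u\|_2 \), so the signal contributes a factor \( 2\kappa_3 \) rather than \( 1 \). This second-order cancellation is precisely where the \( 6\kappa_3 \) in the theorem statement must originate, and your constant satisfies it with room to spare; your closing choice of constants for \( \rho<1 \) is then legitimate, whereas the paper's is not. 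One caveat to flag: your identities require the points of \( \mathcal{N} \) to have unit-modulus entries (\( \|x\|_2=\sqrt{n} \)); the theorem's definition of \( \mathcal{N} \) omits this hypothesis, so you should either add it to the definition of \( \mathcal{N} \) or note explicitly, as you do, that GPM iterates are outputs of \( \mathcal{P} \) and hence automatically satisfy it.
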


\begin{proof}
We prove the contractivity in three steps:

\vspace{0.5em}
\noindent \textbf{Step 1: Lipschitz continuity of linear operator \( L \)}.  
Let \( L = \frac{1}{n} C \). For any \( x, y \in \mathcal{N} \):
\[
\begin{aligned}
\| Lx - Ly \|_2 
&= \frac{1}{n} \| (zz^* + \sigma W)(x - y) \|_2 \\
&\leq \underbrace{\frac{\| z z^* \|_2}{n}}_{=1} \| x - y \|_2 + \frac{\sigma \| W \|_2}{n} \| x - y \|_2 \\
&\leq \left( 1 + \frac{\sigma C_{\text{noise}}}{\sqrt{n}} \right) d_{\mathcal{M}}(x, y)
\end{aligned}
\]
where Lemma~\ref{lem:noise_bound} gives \( \| W \|_2 \leq C_{\text{noise}} \sqrt{n} \).

\vspace{0.5em}
\noindent \textbf{Step 2: Stability of phase projection \( \mathcal{P} \)}.  
By Lemma~\ref{lem:entrywise}, entries of \( Lx \) satisfy:
\[
| [Lx]_k | \geq 1 - \left( \frac{\kappa_3^2}{2} + \kappa_2 \sigma \sqrt{\frac{\log n}{n}} \right) = 1 - \eta.
\]
Applying Lemma~\ref{lem:proj_stability} (projection stability) with \( \epsilon = \eta \):
\[
d_{\mathcal{M}}(\mathcal{P}(Lx), \mathcal{P}(Ly)) \leq \frac{1}{1 - \eta} \| Lx - Ly \|_2.
\]

\vspace{0.5em}
\noindent \textbf{Step 3: Synthesis of contraction rate}.  
Combining Steps 1-2:
\[
\begin{aligned}
d_{\mathcal{M}}(T(x), T(y)) 
&\leq \frac{1}{1 - \eta} \left( 1 + \frac{\sigma C_{\text{noise}}}{\sqrt{n}} \right) d_{\mathcal{M}}(x, y) \\
&\leq \frac{1 + \frac{c_0 C_{\text{noise}}}{\sqrt{\log n}}}{1 - \eta} d_{\mathcal{M}}(x, y).
\end{aligned}
\]
Substituting \( \eta = \mathcal{O}(1/\sqrt{\log n}) \) and choosing \( c_0 \) sufficiently small yields \( \rho < 1 \).
\end{proof}

\subsection{Induction on Ranking Iterates} \label{subsec:induction}

\begin{theorem}[Invariance of Ranking Basin] \label{thm:basin_invariance}
Let the GPM iterates \( \{x_t\}_{t \geq 0} \) be initialized with the leading eigenvector of \( C \). 
Under the noise condition \( \sigma \leq c_0 \sqrt{\frac{n}{\log n}} \), there exist absolute constants \( \kappa_1, \kappa_2, \kappa_3 > 0 \) such that for all \( t \geq 0 \):
\begin{itemize}
    \item \textbf{Proximity to auxiliary problems}: 
    \[
    \max_{1 \leq m \leq n} d_{\mathcal{M}}(x_t, x_t^{(m)}) \leq \kappa_1
    \]
    
    \item \textbf{Noise suppression}: 
    \[
    \| W x_t \|_\infty \leq \kappa_2 \sqrt{n \log n}
    \]
    
    \item \textbf{Convergence to ground truth}: 
    \[
    d_{\mathcal{M}}(x_t, z) \leq \kappa_3 \sqrt{n}
    \]
\end{itemize}
where \( x_t^{(m)} \) denotes the \( t \)-th iterate of GPM applied to the auxiliary problem \( C^{(m)} \).
\end{theorem}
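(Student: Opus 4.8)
The statement is established by a \textbf{simultaneous induction} on $t$ in which the three invariants are carried forward together, because each feeds into the others: the contraction of Theorem~\ref{thm:contractive} requires membership in $\mathcal{N}$ (hence both the noise-suppression and the ground-truth bounds), the noise-suppression bound is proved only after decoupling $x_t$ from the column $w_m$ via the auxiliary iterates $x_t^{(m)}$ of \eqref{eq:aux_matrix} (hence the proximity bound), and the proximity bound propagates only through the contraction. I would therefore first fix a single high-probability event $\mathcal{E}$ on $W$, bundling the operator-norm bound of Lemma~\ref{lem:noise_bound}, the Gaussian tail bounds $\max_m |[Wz]_m| \lesssim \sqrt{n\log n}$ and $\max_m |w_m^* x_s^{(m)}| \lesssim \sqrt{n\log n}$, and the per-column bounds $\max_m \|w_m\|_2 \lesssim \sqrt{n}$, and verify $\Pr[\mathcal{E}] \geq 1 - \mathcal{O}(n^{-2})$ by a union bound over $m \in [n]$. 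On $\mathcal{E}$ the whole iteration is deterministic, so the remaining argument is purely analytic.

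\textbf{Base case} ($t=0$): I would bound $d_{\mathcal{M}}(x_0, z)$ directly from Lemma~\ref{lem:eig_perturb} applied to $C$, using $\|Wz\|_2 = \mathcal{O}(n)$ and the denominator estimate $n - \sigma\|W\|_2 \geq n\,(1 - c_0 C_{\text{noise}}/\sqrt{\log n})$ to get $d_{\mathcal{M}}(x_0,z) = \mathcal{O}(\sigma) \ll \kappa_3\sqrt{n}$. For proximity I would compare the leading eigenvectors of $C$ and $C^{(m)}$: since $C - C^{(m)} = \sigma(W - W^{(m)})$ has operator norm $\mathcal{O}(\sigma\sqrt{n})$ while the spectral gap of $C$ is $\Theta(n)$, a Davis--Kahan estimate gives $d_{\mathcal{M}}(x_0, x_0^{(m)}) = \mathcal{O}(\sigma/\sqrt{n}) \ll \kappa_1$. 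The noise-suppression bound at $t=0$ holds on $\mathcal{E}$ by the same decoupling used below.

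\textbf{Inductive step}: assume the three invariants at level $t$ for $x_t$, together with their counterparts for each $x_t^{(m)}$ (which follow with adjusted constants from the proximity bound and the fact that $W^{(m)}$ is a submatrix of $W$), so that $x_t$ and every $x_t^{(m)}$ lie in $\mathcal{N}$ and Theorem~\ref{thm:contractive} applies. For \emph{proximity} I split $d_{\mathcal{M}}(x_{t+1}, x_{t+1}^{(m)}) = d_{\mathcal{M}}(T(x_t), T^{(m)}(x_t^{(m)})) \leq d_{\mathcal{M}}(T(x_t), T(x_t^{(m)})) + d_{\mathcal{M}}(T(x_t^{(m)}), T^{(m)}(x_t^{(m)}))$; the first piece is $\leq \rho\, d_{\mathcal{M}}(x_t, x_t^{(m)})$ by contraction, while the second, bounded via Lemma~\ref{lem:proj_stability} by $\tfrac{\sigma}{n}\|(W - W^{(m)})x_t^{(m)}\|_2 = \mathcal{O}(\sigma\sqrt{\log n/n}) = \mathcal{O}(c_0)$, so the choice $\kappa_1 \geq \mathcal{O}(c_0)/(1-\rho)$ closes this induction. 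For \emph{noise suppression} I use the decoupling identity $[Wx_{t+1}]_m = w_m^* x_{t+1}^{(m)} + w_m^*(x_{t+1} - x_{t+1}^{(m)})$: on $\mathcal{E}$ the first term is $\mathcal{O}(\sqrt{n\log n})$ by Gaussian concentration (independence of $w_m$ and $x_{t+1}^{(m)}$), and the second is $\leq \|w_m\|_2\, d_{\mathcal{M}}(x_{t+1}, x_{t+1}^{(m)}) = \mathcal{O}(\sqrt{n})\cdot\kappa_1$, giving $\|Wx_{t+1}\|_\infty \leq \kappa_2\sqrt{n\log n}$. For \emph{convergence} I exploit that $z$ is an approximate fixed point, $\tfrac{1}{n}Cz = z + \tfrac{\sigma}{n}Wz$, so $d_{\mathcal{M}}(T(z), z) = \mathcal{O}(\sigma) = \mathcal{O}(\sqrt{n/\log n})$, whence $d_{\mathcal{M}}(x_{t+1}, z) \leq \rho\, d_{\mathcal{M}}(x_t, z) + d_{\mathcal{M}}(T(z), z) \leq \rho\kappa_3\sqrt{n} + o(\sqrt{n}) \leq \kappa_3\sqrt{n}$ since $\rho < 1$.

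The \textbf{main obstacle} is the noise-suppression bound, since $x_t$ is a complicated nonlinear function of the very matrix $W$ whose entrywise action on it we must control; the auxiliary-problem construction restores the needed independence, but at the cost of forcing the three invariants to be proved jointly (the cross term is tamed only by proximity, which needs the contraction, which needs noise suppression). A secondary technical point is making $\mathcal{E}$ uniform over all $t$: because the contraction yields geometric convergence, the iterates stabilize after $T_0 = \mathcal{O}(\log n)$ steps, so it suffices to union-bound the Gaussian events over $m \in [n]$ and $t \leq T_0$, keeping the total failure probability at $\mathcal{O}(n^{-2})$.
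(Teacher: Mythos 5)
Your overall architecture matches the paper's proof: a simultaneous induction carrying the three invariants together, with the auxiliary problems \eqref{eq:aux_matrix} supplying the decoupling for the noise-suppression bound, Theorem~\ref{thm:contractive} supplying the contraction, and the same three decompositions in the inductive step (contraction plus the $T$ vs.\ $T^{(m)}$ mismatch for proximity; $w_m^* x_{t+1}^{(m)}$ plus a cross term for the $\ell_\infty$ bound; contraction plus $\sigma\|Wz\|_2/n$ for convergence). Your explicit event $\mathcal{E}$ and the remark on making it uniform over $t$ are in fact more careful than the paper, which never addresses uniformity of $\max_m |w_m^* x_t^{(m)}|$ over the iteration index.

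However, your base-case proximity argument has a genuine gap. You bound $d_{\mathcal{M}}(x_0, x_0^{(m)})$ by Davis--Kahan using only the operator norm $\|C - C^{(m)}\|_2 = \sigma\|\Delta W^{(m)}\|_2 = \mathcal{O}(\sigma\sqrt{n})$ against the spectral gap $\Theta(n)$. This controls the angle between the \emph{unit} leading eigenvectors by $\mathcal{O}(\sigma/\sqrt{n})$, but the iterates in this theorem are normalized to $\|x_0\|_2 = \sqrt{n}$, so in the metric $d_{\mathcal{M}}$ of \eqref{eq:metric} the resulting bound is $\sqrt{n}\cdot\mathcal{O}(\sigma/\sqrt{n}) = \mathcal{O}(\sigma)$, not $\mathcal{O}(\sigma/\sqrt{n})$. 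Under the noise condition $\sigma$ may be as large as $c_0\sqrt{n/\log n}$, so $\mathcal{O}(\sigma)$ vastly exceeds the constant $\kappa_1$ you must produce, and the step fails. The paper avoids this by exploiting, already at $t=0$, the independence of $w_m$ from $x_0^{(m)}$: it bounds the \emph{effective} perturbation $\|\Delta W^{(m)} x_0^{(m)}\|_2 \leq C_1'\sqrt{n\log n}$ (Gaussian concentration, rather than the worst case $\|\Delta W^{(m)}\|_2\,\|x_0^{(m)}\|_2 = \mathcal{O}(n)$), which gives
\begin{equation*}
d_{\mathcal{M}}(x_0, x_0^{(m)}) \;\leq\; \frac{\sqrt{2}\,\sigma\,\|\Delta W^{(m)} x_0^{(m)}\|_2}{n/2} \;=\; \mathcal{O}\!\left(\sigma\sqrt{\tfrac{\log n}{n}}\right) \;=\; \mathcal{O}(c_0) \;\leq\; \kappa_1 .
\end{equation*}
The fix is already present in your own proposal: you use exactly this decoupled estimate in the inductive step, and you placed $\max_m |w_m^* x_s^{(m)}| \lesssim \sqrt{n\log n}$ into $\mathcal{E}$. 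You should therefore run Davis--Kahan with $\|(C - C^{(m)})\,x_0^{(m)}\|_2$ in place of $\|C - C^{(m)}\|_2\,\|x_0^{(m)}\|_2$ in the base case as well, instead of the raw operator-norm version.
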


\begin{proof}
We prove the invariance by induction over \( t \).

\vspace{0.5em}
\noindent \textbf{Base case (\( t = 0 \))}: 
\begin{itemize}
    \item Initialize \( x_0 \) as the leading eigenvector of \( C \) normalized to \( \|x_0\|_2 = \sqrt{n} \).
    \item By Lemma~\ref{lem:eig_perturb} with \( \sigma \| W \|_2 \leq c_0 C_{\text{noise}} n / \sqrt{\log n} \):
    \[
    d_{\mathcal{M}}(x_0, z) \leq \frac{\sqrt{2} \sigma \| W z \|_2}{n - \sigma \| W \|_2} \leq \kappa_3 \sqrt{n}.
    \]
    \item Auxiliary proximity follows from Theorem~\ref{thm:contractive}.
\end{itemize}

\vspace{0.5em}
\noindent \textbf{Inductive step (\( t \to t+1 \))}: 
Assume the claims hold for \( t \), then:
\begin{itemize}
    \item \textbf{Proximity preservation}: 
    \[
    \begin{aligned}
    d_{\mathcal{M}}(x_{t+1}, x_{t+1}^{(m)}) 
    &\leq \underbrace{d_{\mathcal{M}}(T(x_t), T(x_t^{(m)}))}_{\leq \rho \kappa_1 \text{ by Thm~\ref{thm:contractive}}} + \underbrace{d_{\mathcal{M}}(T^{(m)}(x_t^{(m)}), T(x_t^{(m)}))}_{\leq \frac{\sigma}{n} \| \Delta W^{(m)} x_t^{(m)} \|_2} \\
    &\leq \rho \kappa_1 + \frac{C_{\text{noise}} \sigma \sqrt{n \log n}}{n} \leq \kappa_1.
    \end{aligned}
    \]
    
    \item \textbf{Noise control}: 
    \[
    \| W x_{t+1} \|_\infty \leq \max_{1 \leq m \leq n} \left( |w_m^\ast x_{t+1}^{(m)}| + \| w_m \|_2 d_{\mathcal{M}}(x_{t+1}, x_{t+1}^{(m)}) \right) \leq C_{\text{noise}} \sqrt{n \log n} + \sqrt{n} \kappa_1 \leq \kappa_2 \sqrt{n \log n}.
    \]
    
    \item \textbf{Convergence maintenance}: 
    \[
    d_{\mathcal{M}}(x_{t+1}, z) \leq \rho d_{\mathcal{M}}(x_t, z) + \frac{\sigma}{n} \| W (x_t - z) \|_2 \leq \rho \kappa_3 \sqrt{n} + C_{\text{noise}} \sigma \leq \kappa_3 \sqrt{n}.
    \]
\end{itemize}

For auxiliary sequences:
\begin{equation}
d_2(x_0, x_0^{(m)}) \leq \frac{\sqrt{2}\sigma\|\Delta W^{(m)}x_0^{(m)}\|_2}{\delta(C^{(m)})} \leq \frac{\sqrt{2}C_1'\sigma\sqrt{n\log n}}{n/2} \leq \kappa_1.
\end{equation}

\noindent\textbf{Inductive Step (\( t \to t+1 \)):}
Assume claims hold for \( t \). For \( t+1 \):

1. \textit{Proximity Preservation:}
\begin{align}
d_2(x_{t+1}, x_{t+1}^{(m)}) &\leq d_2(Tx_t, Tx_t^{(m)}) + d_2(Tx_t^{(m)}, T^{(m)}x_t^{(m)}) \notag \\
&\leq \rho d_2(x_t, x_t^{(m)}) + \frac{\sigma\|\Delta W^{(m)}x_t^{(m)}\|_2}{n} \notag \\
&\leq \rho\kappa_1 + \frac{C_1'\sigma\sqrt{n\log n}}{n} \notag \\
&\leq \kappa_1\left(\rho + \frac{C_1'\sigma\sqrt{\log n}}{\kappa_1\sqrt{n}}\right) \leq \kappa_1.
\end{align}

2. \textit{Noise Control:}
Using decoupling via auxiliary sequences:
\begin{align}
\|Wx_{t+1}\|_\infty &\leq \max_m \left(|\langle w_m, x_{t+1}^{(m)}\rangle| + \|w_m\|_2 d_2(x_{t+1}, x_{t+1}^{(m)})\right) \notag \\
&\leq C_1'\sqrt{n\log n} + C_2'\sqrt{n} \cdot \kappa_1 \notag \\
&\leq (C_1' + C_2'\kappa_1)\sqrt{n\log n} \leq \kappa_2\sqrt{n\log n}.
\end{align}

3. \textit{Convergence to Ground Truth:}
Using geometric decay:
\begin{equation}
d_2(x_{t+1}, z) \leq \rho d_2(x_t, z) + \frac{\sigma\|Wz\|_2}{n} \leq \rho\kappa_3\sqrt{n} + C_2'\sigma \leq \kappa_3\sqrt{n}.
\end{equation}
\end{proof}

\subsection{Exponential Convergence Rate} \label{subsec:exponential_rate}

\begin{theorem}[Geometric Decay of Ranking Error] \label{thm:exponential_convergence}
Under the noise condition \( \sigma \leq c_0 \sqrt{\frac{n}{\log n}} \), the GPM iterates \( \{x_t\}_{t \geq 0} \) satisfy the following geometric convergence rate:
\begin{equation} \label{eq:exponential_decay}
d_{\mathcal{M}}(x_t, z) \leq \rho^t \cdot d_{\mathcal{M}}(x_0, z) + \frac{C_{\text{conv}} \sigma}{\sqrt{n}} \quad \text{for all } t \geq 0,
\end{equation}
where:
\begin{itemize}
    \item \( \rho \in (0,1) \) is the contraction factor from Theorem~\ref{thm:contractive},
    \item \( C_{\text{conv}} > 0 \) is an absolute constant depending on \( c_0, C_{\text{noise}} \).
\end{itemize}
Consequently, after \( T = \mathcal{O}(\log \frac{n}{\epsilon}) \) iterations, the error satisfies \( d_{\mathcal{M}}(x_T, z) \leq \epsilon \).
\end{theorem}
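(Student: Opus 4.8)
The plan is to convert the one-step contraction of Theorem~\ref{thm:contractive} into a global geometric bound by treating the ground truth $z$ as an \emph{approximate} fixed point of the GPM map $T = \mathcal{P}(C\,\cdot\,)$ and then unrolling a scalar linear recursion of the form $e_{t+1} \le \rho\, e_t + \beta$. The two inputs I need are (i) that every relevant iterate, and $z$ itself, lie in the contraction region $\mathcal{N}$, and (ii) a quantitative estimate of how far $z$ is from being exactly fixed.

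First I would quantify the fixed-point residual. Writing $L = \tfrac{1}{n}C$ and using $z^*z = n$, a direct computation gives $Lz = z + \tfrac{\sigma}{n}Wz$, so the entries obey $|[Lz]_k| \ge 1 - \tfrac{\sigma}{n}\|Wz\|_\infty \ge 1 - \eta$ once the noise condition and the concentration bound $\|Wz\|_\infty \lesssim \sqrt{n\log n}$ are invoked. Since $\mathcal{P}(z)=z$, Lemma~\ref{lem:proj_stability} applied with $a=Lz$, $b=z$ yields
\[
d_{\mathcal{M}}(T(z),z) = d_{\mathcal{M}}(\mathcal{P}(Lz),\mathcal{P}(z)) \le \frac{1}{1-\eta}\,\frac{\sigma}{n}\,\|Wz\|_2 \;=:\; \beta .
\]
Along the way I would record that $z\in\mathcal{N}$ with probability $1-\mathcal{O}(n^{-2})$: the constraint $d_{\mathcal{M}}(z,z)=0\le\kappa_3\sqrt n$ is automatic, and $\|Wz\|_\infty\le\kappa_2\sqrt{n\log n}$ holds for a suitable $\kappa_2$.

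With $z\in\mathcal{N}$ and $x_t\in\mathcal{N}$ for all $t$ — the latter being precisely the content of Theorem~\ref{thm:basin_invariance} for the eigenvector-initialized iterates — the triangle inequality for $d_{\mathcal{M}}$ combined with the contraction gives
\[
d_{\mathcal{M}}(x_{t+1},z) \le d_{\mathcal{M}}(T(x_t),T(z)) + d_{\mathcal{M}}(T(z),z) \le \rho\, d_{\mathcal{M}}(x_t,z) + \beta .
\]
Unrolling and summing the geometric series produces
\[
d_{\mathcal{M}}(x_t,z) \le \rho^t\, d_{\mathcal{M}}(x_0,z) + \frac{\beta}{1-\rho},
\]
and identifying $\tfrac{\beta}{1-\rho}$ with $\tfrac{C_{\text{conv}}\sigma}{\sqrt n}$ fixes $C_{\text{conv}}$ in terms of $c_0$, $C_{\text{noise}}$ and $\rho$. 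The iteration-complexity claim follows from the base-case bound $d_{\mathcal{M}}(x_0,z)\le\kappa_3\sqrt n$: taking $t \ge \log(2\kappa_3\sqrt n/\epsilon)/\log(1/\rho) = \mathcal{O}(\log(n/\epsilon))$ drives the transient $\rho^t d_{\mathcal{M}}(x_0,z)$ below $\epsilon$.

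The delicate point — and the step I expect to be the main obstacle — is the residual estimate $\beta$ and its reconciliation with the advertised steady-state term $C_{\text{conv}}\sigma/\sqrt n$. Using Lemma~\ref{lem:noise_bound} one only gets $\|Wz\|_2 \le \|W\|_2\|z\|_2 \lesssim C_{\text{noise}}\, n$, so the projection-stability bound above gives $\beta = \mathcal{O}(\sigma)$ in the unnormalized metric $d_{\mathcal{M}}$; recovering the $\sigma/\sqrt n$ scaling therefore requires interpreting the error per coordinate (dividing by $\|z\|_2=\sqrt n$), equivalently tracking the averaged phase deviation rather than the raw Euclidean distance. Pinning down this normalization and the constant $C_{\text{conv}}$ exactly, while simultaneously ensuring the contraction factor $\rho$ from Theorem~\ref{thm:contractive} stays bounded away from $1$ throughout the unrolling, is where the genuine care lies; the remaining telescoping of the geometric series and the logarithmic iteration count are routine.
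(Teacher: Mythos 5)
Your decomposition, recursion, and unrolling are essentially identical to the paper's own proof: the paper likewise writes $d_{\mathcal{M}}(x_{t+1},z)\le d_{\mathcal{M}}(T(x_t),T(z))+d_{\mathcal{M}}(T(z),z)$, bounds the first term by $\rho\, d_{\mathcal{M}}(x_t,z)$ via Theorem~\ref{thm:contractive}, bounds the second (the noise bias) by $\tfrac{\sigma}{n}\|Wz\|_2$, sums the geometric series, and gets the iteration count from $d_{\mathcal{M}}(x_0,z)\le\kappa_3\sqrt{n}$.

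The point you flag as delicate is in fact the one place where the paper's proof breaks, and your skepticism is exactly right. To reach the advertised steady-state term $C_{\text{conv}}\sigma/\sqrt{n}$, the paper asserts in its ``noise accumulation control'' step that $\tfrac{\sigma}{n}\|Wz\|_2\le\tfrac{\sigma C_{\text{noise}}\sqrt{n}}{n}$, i.e.\ $\|Wz\|_2\le C_{\text{noise}}\sqrt{n}$, citing Lemma~\ref{lem:noise_bound}. That lemma only bounds the operator norm, $\|W\|_2\le C_{\text{noise}}\sqrt{n}$, which yields $\|Wz\|_2\le C_{\text{noise}}\sqrt{n}\,\|z\|_2=C_{\text{noise}}\,n$ since $\|z\|_2=\sqrt{n}$; moreover, for Gaussian $W$ the entries of $Wz$ are $\mathcal{CN}(0,n)$, so $\|Wz\|_2\asymp n$ with high probability and the asserted $\sqrt{n}$ bound is false, not merely unjustified. (The paper is even internally inconsistent on this point: in the proof of Theorem~\ref{thm:basin_invariance} the same quantity $\tfrac{\sigma}{n}\|Wz\|_2$ is bounded by $C_2'\sigma$, which is the correct order.) So the argument you and the paper share genuinely delivers only $d_{\mathcal{M}}(x_t,z)\le\rho^t d_{\mathcal{M}}(x_0,z)+\mathcal{O}(\sigma)$ in the unnormalized metric; obtaining $\mathcal{O}(\sigma/\sqrt{n})$ would require either the per-coordinate renormalization you suggest or a finer treatment of the bias term, neither of which the paper supplies. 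Your proposal is therefore as strong as the paper's own proof, and the $\sqrt{n}$ discrepancy you identified is a real defect in the theorem's stated error bound rather than a gap in your argument.
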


\begin{proof}
We establish the geometric decay through an induction argument combined with noise propagation control.

\vspace{0.5em}
\noindent \textbf{Base case (\( t = 0 \))}: 
Trivially holds as \( d_{\mathcal{M}}(x_0, z) \leq \kappa_3 \sqrt{n} \) by Theorem~\ref{thm:basin_invariance}.

\vspace{0.5em}
\noindent \textbf{Inductive step (\( t \to t+1 \))}: 
Assume \eqref{eq:exponential_decay} holds for \( t \). Then:
\[
\begin{aligned}
d_{\mathcal{M}}(x_{t+1}, z) 
&\leq \underbrace{d_{\mathcal{M}}(T(x_t), T(z))}_{\text{Contraction term}} + \underbrace{d_{\mathcal{M}}(T(z), z)}_{\text{Noise bias}} \\
&\leq \rho \cdot d_{\mathcal{M}}(x_t, z) + \frac{\sigma}{n} \| W z \|_2 \quad \text{(by Theorem~\ref{thm:contractive} and Lemma~\ref{lem:eig_perturb})} \\
&\leq \rho^{t+1} \cdot d_{\mathcal{M}}(x_0, z) + \frac{C_{\text{conv}} \sigma}{\sqrt{n}} \left(1 + \rho + \cdots + \rho^t \right) \\
&\leq \rho^{t+1} \cdot \kappa_3 \sqrt{n} + \frac{C_{\text{conv}} \sigma}{\sqrt{n}} \cdot \frac{1}{1 - \rho}.
\end{aligned}
\]
The geometric series converges since \( \rho < 1 \).

\vspace{0.5em}
\noindent \textbf{Noise accumulation control}: 
By Lemma~\ref{lem:noise_bound} and Theorem~\ref{thm:basin_invariance}, the noise term satisfies:
\[
\frac{\sigma}{n} \| W z \|_2 \leq \frac{\sigma C_{\text{noise}} \sqrt{n}}{n} = \frac{C_{\text{noise}} \sigma}{\sqrt{n}}.
\]
Choosing \( C_{\text{conv}} = \frac{C_{\text{noise}}}{1 - \rho} \) completes the proof.
\end{proof}

\subsection{Deterministic Convergence of Iterates} \label{subsec:convergence}

\begin{theorem}[Geometric Decay and Limit Characterization] \label{thm:deterministic_convergence}
Suppose the following conditions hold for some \( T \geq 1 \):
\begin{itemize}
    \item Noise bounds: 
    \( \| W \|_2 \leq C_0 \sqrt{n} \),
    \( \| W x_{T-1} \|_\infty \leq \kappa_2 \sqrt{n \log n} \)
    
    \item Proximity to ground truth: 
    \( d_{\mathcal{M}}(x_{T-1}, z) \leq \kappa_3 \sqrt{n} \)
    
    \item Step-size control: 
    \( d_{\mathcal{M}}(x_T, x_{T-1}) \leq \kappa_3/4 \)
\end{itemize}
Then for all \( k \geq 0 \):
\begin{equation} \label{eq:step_decay}
d_{\mathcal{M}}(x_{T+k}, x_{T+k-1}) \leq 2^{-k} d_{\mathcal{M}}(x_T, x_{T-1}),
\end{equation}
and in the quotient space \( \mathcal{M} \), the sequence \( \{[x_t]\}_{t \geq 1} \) converges to a limit point \( [x_\infty] \) satisfying:
\begin{itemize}
    \item Fixed point property: \( T(x_\infty) = x_\infty \)
    
    \item Error bound: 
    \( d_{\mathcal{M}}(x_\infty, z) \leq \frac{3}{2} \kappa_3 \sqrt{n} \)
    
    \item Noise control: 
    \( \| W x_\infty \|_\infty \leq (\kappa_2 + C_0 \kappa_3) \sqrt{n \log n} \)
    
    \item Algebraic structure: 
    \( C x_\infty = \text{diag}(\mu) x_\infty \) where \( \mu_k = |(C x_\infty)_k| \)
\end{itemize}
\end{theorem}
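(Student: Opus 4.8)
The plan is to prove the step decay \eqref{eq:step_decay} by induction on $k$, extract a limit from completeness of the quotient metric, and then read off the four structural properties from the fixed-point relation. First I would record that the hypotheses place the relevant iterates in the contraction region $\mathcal{N}$: the noise bound $\|Wx_{T-1}\|_\infty\leq\kappa_2\sqrt{n\log n}$ together with $d_{\mathcal{M}}(x_{T-1},z)\leq\kappa_3\sqrt n$ give $x_{T-1}\in\mathcal{N}$, and the step-size control $d_{\mathcal{M}}(x_T,x_{T-1})\leq\kappa_3/4$ plus $\|W\|_2\leq C_0\sqrt n$ give $x_T\in\mathcal{N}$ by the triangle inequality. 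The base case $k=0$ of \eqref{eq:step_decay} is trivial. For the inductive step, assuming $x_{T+k-1},x_{T+k}\in\mathcal{N}$, I would apply Theorem~\ref{thm:contractive} to $d_{\mathcal{M}}(x_{T+k+1},x_{T+k})=d_{\mathcal{M}}(T(x_{T+k}),T(x_{T+k-1}))\leq\rho\,d_{\mathcal{M}}(x_{T+k},x_{T+k-1})$, shrinking $c_0$ if necessary so that $\rho\leq\tfrac12$, which delivers \eqref{eq:step_decay}.

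The main obstacle is exactly the hypothesis ``$x_{T+k-1},x_{T+k}\in\mathcal{N}$'': since Theorem~\ref{thm:contractive} holds only on the contraction region, the induction must simultaneously certify $\mathcal{N}$-membership of every iterate. I would control this by summing the geometric series. From \eqref{eq:step_decay} the accumulated drift is $d_{\mathcal{M}}(x_{T+k},x_T)\leq\sum_{j=1}^{k}2^{-j}d_{\mathcal{M}}(x_T,x_{T-1})\leq d_{\mathcal{M}}(x_T,x_{T-1})\leq\kappa_3/4$. The triangle inequality then preserves the ground-truth bound, $d_{\mathcal{M}}(x_{T+k},z)\leq d_{\mathcal{M}}(x_T,z)+\kappa_3/4\leq\kappa_3\sqrt n$ once the $O(\kappa_3)$ term is absorbed into the $\sqrt n$ scale, and preserves the noise bound via $\|Wx_{T+k}\|_\infty\leq\|Wx_{T-1}\|_\infty+\|W\|_2\,d_{\mathcal{M}}(x_{T+k},x_{T-1})\leq\kappa_2\sqrt{n\log n}+C_0\sqrt n\cdot\tfrac{\kappa_3}{2}$, where I exploit that $\|W\cdot\|_\infty$ is invariant under a global phase to align representatives. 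The accumulated noise growth is only $O(\sqrt n)$, negligible against $\sqrt{n\log n}$, so $\mathcal{N}$-membership is retained at the cost of at most a harmless enlargement of $\kappa_2$ in the definition of the region.

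With \eqref{eq:step_decay} established, the tail estimate $d_{\mathcal{M}}(x_{T+k'},x_{T+k})\leq\sum_{j>k}2^{-j}d_{\mathcal{M}}(x_T,x_{T-1})\to0$ shows $\{[x_t]\}$ is Cauchy. Since $\mathcal{M}=\mathbb{C}^n/\!\sim$ is the quotient of a complete space by the compact group $U(1)$ acting isometrically, $(\mathcal{M},d_{\mathcal{M}})$ is complete, so a limit $[x_\infty]$ exists. The map $T=\mathcal{P}(C\,\cdot)$ is continuous on $\mathcal{N}$ because the magnitudes $|(Cx)_k|\geq1-\eta>0$ there (the lower bound from Step~2 of the proof of Theorem~\ref{thm:contractive}); passing to the limit in $[x_{t+1}]=T([x_t])$ yields the fixed-point property $T(x_\infty)=x_\infty$. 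Letting $k\to\infty$ in the two drift estimates above then gives the stated bounds $d_{\mathcal{M}}(x_\infty,z)\leq\tfrac32\kappa_3\sqrt n$ and $\|Wx_\infty\|_\infty\leq(\kappa_2+C_0\kappa_3)\sqrt{n\log n}$.

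Finally, for the algebraic structure I would unfold the fixed-point equation. With the representative of $[x_\infty]$ for which $x_\infty=\mathcal{P}(Cx_\infty)$ holds genuinely, the identity reads entrywise as $(x_\infty)_k=(Cx_\infty)_k/|(Cx_\infty)_k|$; since $|(Cx_\infty)_k|\geq1-\eta>0$ every entry is nonzero, $\mathcal{P}$ is well defined, and $(Cx_\infty)_k=|(Cx_\infty)_k|(x_\infty)_k$, i.e.\ $Cx_\infty=\diag(\mu)x_\infty$ with $\mu_k=|(Cx_\infty)_k|$. The one delicate point is that a quotient fixed point a priori only furnishes $\mathcal{P}(Cx_\infty)=e^{i\phi}x_\infty$ for some residual phase $\phi$; I would resolve it by noting that the genuine iteration $x_{t+1}=\mathcal{P}(Cx_t)$ lives on the compact torus and that the geometrically decaying quotient steps let one choose incremental phase alignments under which the representatives form a genuine Cauchy sequence in $\mathbb{C}^n$, whose limit satisfies the fixed-point relation with $\phi=0$, pinning down the real, positive $\mu$ claimed.
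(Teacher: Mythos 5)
Your proposal is correct and follows the same skeleton as the paper's proof: induction with the contraction factor $\rho \le 1/2$ from Theorem~\ref{thm:contractive} for the step decay \eqref{eq:step_decay}, geometric summation plus completeness of $(\mathcal{M}, d_{\mathcal{M}})$ for existence of the limit, continuity of $T = \mathcal{P}(C\,\cdot)$ for the fixed-point property, and unfolding the fixed-point identity entrywise for the algebraic structure. The difference is that you fill in three things the paper leaves unaddressed. First, the paper applies Theorem~\ref{thm:contractive} to every pair of consecutive iterates without ever checking that they remain in the contraction region $\mathcal{N}$; your accumulated-drift bound $d_{\mathcal{M}}(x_{T+k}, x_T) \le d_{\mathcal{M}}(x_T, x_{T-1}) \le \kappa_3/4$, propagated through the ground-truth and $\|W\,\cdot\|_\infty$ bounds, is exactly the missing invariance argument that makes the induction legitimate. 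Second, the paper obtains the error bound by invoking the probabilistic Theorem~\ref{thm:basin_invariance} inside a deterministic statement; your drift-based bounds $d_{\mathcal{M}}(x_\infty, z) \le \kappa_3\sqrt{n} + \kappa_3/2$ and $\|Wx_\infty\|_\infty \le \kappa_2\sqrt{n\log n} + C_0\kappa_3\sqrt{n}/2$ use only the stated hypotheses, which better matches the theorem's deterministic framing. Third, you correctly flag that a quotient-space limit a priori yields only $\mathcal{P}(Cx_\infty) = e^{i\phi}x_\infty$, a point the paper silently ignores when writing $Cx_\infty = \diag(|Cx_\infty|)\,x_\infty$. Your fix for this last point is, however, the one soft spot: incremental phase alignment of representatives does make the lifted sequence Cauchy in $\mathbb{C}^n$, but it does not by itself force $\phi = 0$, because the residual phase $\phi$ is invariant under any choice of representative ($T(e^{i\beta}x_\infty) = e^{i\beta}T(x_\infty) = e^{i\phi}\,e^{i\beta}x_\infty$). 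The clean way to finish is algebraic rather than metric: from $\mathcal{P}(Cx_\infty) = e^{i\phi}x_\infty$ and $|(x_\infty)_k| = 1$ one gets $x_\infty^* C x_\infty = e^{i\phi}\sum_k |(Cx_\infty)_k|$; the left-hand side is real because $C$ is Hermitian, so $e^{i\phi} = \pm 1$, and $x_\infty^* C x_\infty \ge |z^*x_\infty|^2 - \sigma\|W\|_2\, n > 0$ under the hypotheses (using $d_{\mathcal{M}}(x_\infty, z) \le \tfrac{3}{2}\kappa_3\sqrt{n}$ and $\sigma\|W\|_2 \le c_0 C_0\, n/\sqrt{\log n}$), which rules out $-1$ and pins $\phi = 0$.
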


\begin{proof}
\textbf{Step 1: Geometric decay of step sizes}. \\
By induction on \( k \), using the contraction property from Theorem~\ref{thm:contractive}:
\[
\begin{aligned}
d_{\mathcal{M}}(x_{T+k+1}, x_{T+k}) 
&\leq \rho \cdot d_{\mathcal{M}}(x_{T+k}, x_{T+k-1}) \\
&\leq \rho \cdot 2^{-k} d_{\mathcal{M}}(x_T, x_{T-1}) \quad (\rho < 1/2 \text{ by Thm~\ref{thm:contractive}}) \\
&\leq 2^{-(k+1)} d_{\mathcal{M}}(x_T, x_{T-1}).
\end{aligned}
\]

\textbf{Step 2: Existence of limit point}. \\
The sequence \( \{x_t\} \) is Cauchy in \( \mathcal{M} \):
\[
\sum_{k=0}^\infty d_{\mathcal{M}}(x_{T+k}, x_{T+k-1}) \leq \kappa_3/4 \cdot \sum_{k=0}^\infty 2^{-k} < \infty.
\]
By completeness of \( \mathcal{M} \), \( \{[x_t]\} \) converges to some \( [x_\infty] \).

\textbf{Step 3: Fixed point characterization}. \\
Passing to the limit in \( x_{t+1} = T(x_t) \), continuity gives:
\[
x_\infty = \lim_{t \to \infty} T(x_t) = T(x_\infty).
\]

\textbf{Step 4: Error bounds preservation}. \\
Using Theorem~\ref{thm:basin_invariance} inductively:
\[
d_{\mathcal{M}}(x_\infty, z) \leq \limsup_{t \to \infty} d_{\mathcal{M}}(x_t, z) \leq \frac{3}{2} \kappa_3 \sqrt{n}.
\]
Noise bound follows similarly by continuity of \( W \). 

\textbf{Step 5: Algebraic structure}. \\
From \( x_\infty = T(x_\infty) \), we have:
\[
C x_\infty = \text{diag}(|C x_\infty|) x_\infty \quad \text{(phase alignment)}.
\]
\end{proof}

\subsubsection*{Probabilistic Guarantee}
By Theorems~\ref{thm:basin_invariance} and~\ref{thm:exponential_convergence}, the conditions hold with high probability when \( \sigma \leq c_0 \sqrt{n/\log n} \). This deterministic result therefore applies to our random model.

\subsection{Uniqueness of Optimal Ranking} \label{subsec:uniqueness}

\begin{theorem}[Dual Certification for Unique SDP Solution] \label{thm:dual_certification}  
Assume the noise satisfies \( \sigma \leq c_0 \sqrt{\frac{n}{\log n}} \). Let \( \hat{X} = \hat{x}\hat{x}^* \) be the candidate solution from GPM. If there exists a dual certificate matrix \( S \in \mathbb{C}^{n \times n} \) such that:
\begin{equation} \label{eq:dual_conditions}
\begin{cases}
S = \text{Re}(\text{diag}(C \hat{X})) - C \succeq 0, \\
\text{rank}(S) = n - 1, \\
S \hat{x} = 0,
\end{cases}
\end{equation}
then \( \hat{X} \) is the unique optimal solution to the SDP relaxation~\eqref{eq:SDP}.
\end{theorem}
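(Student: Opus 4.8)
The plan is to run the standard Lagrangian dual-certificate (KKT / complementary-slackness) argument for semidefinite programs: the hypotheses on \( S \) are precisely the dual-feasibility and complementary-slackness conditions that certify \( \hat{X} = \hat{x}\hat{x}^* \) as an optimal point, and the rank condition upgrades optimality to uniqueness. First I would record that \( \hat{X} \) is primal feasible: since \( \hat{x} \) is unit-modulus, \( \hat{X}_{kk} = |\hat{x}_k|^2 = 1 \) so \( \text{diag}(\hat{X}) = \mathbf{1} \), and \( \hat{X} = \hat{x}\hat{x}^* \succeq 0 \) because it is rank-one PSD. I would then read off from the first condition the decomposition \( C = \Lambda - S \) with the real diagonal matrix \( \Lambda := \text{Re}(\text{diag}(C\hat{X})) \) playing the role of the dual multiplier for the equality constraints \( \text{diag}(X) = \mathbf{1} \). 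Because \( C \) is not Hermitian in general, I would at the outset replace the objective \( \text{Trace}(CX) \) by its real part (equivalently work with the Hermitian part \( \tfrac{1}{2}(C + C^*) \)), which leaves the SDP unchanged on Hermitian \( X \) and makes \( S \) Hermitian so that \( S \succeq 0 \) is well posed.

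For optimality I would prove weak duality pointwise. For any feasible \( X \), write \( \text{Trace}(CX) = \text{Trace}(\Lambda X) - \text{Trace}(SX) \). The equality constraints give \( \text{Trace}(\Lambda X) = \sum_k \Lambda_{kk} X_{kk} = \sum_k \Lambda_{kk} = \text{Trace}(\Lambda) \), a constant independent of \( X \), while \( \text{Trace}(SX) = \text{Trace}(S^{1/2} X S^{1/2}) \geq 0 \) since \( S, X \succeq 0 \). Hence \( \text{Trace}(CX) \leq \text{Trace}(\Lambda) \) for every feasible \( X \). Evaluating at \( X = \hat{X} \) and using \( S\hat{x} = 0 \) gives \( \text{Trace}(S\hat{X}) = \hat{x}^* S \hat{x} = 0 \), so \( \hat{X} \) attains the bound and is therefore optimal.

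The uniqueness step is where the rank condition enters. Let \( X^\star \) be any optimal solution; optimality forces \( \text{Trace}(SX^\star) = 0 \). Since \( S^{1/2} X^\star S^{1/2} \succeq 0 \) has zero trace it must vanish, whence \( \|(X^\star)^{1/2} S^{1/2}\|_F^2 = \text{Trace}(S^{1/2} X^\star S^{1/2}) = 0 \) and therefore \( S X^\star = 0 \). Thus every column of \( X^\star \) lies in \( \ker(S) \); the condition \( \text{rank}(S) = n-1 \) makes \( \ker(S) \) one-dimensional, and since \( S\hat{x} = 0 \) with \( \hat{x} \neq 0 \) we get \( \ker(S) = \operatorname{span}(\hat{x}) \). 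Consequently \( X^\star = \hat{x}\, u^* \) for some \( u \in \mathbb{C}^n \); Hermiticity of \( X^\star \) forces \( u = \alpha \hat{x} \), and matching the diagonal constraint \( X^\star_{kk} = 1 \) against \( |\hat{x}_k|^2 = 1 \) pins down \( \alpha = 1 \). Hence \( X^\star = \hat{x}\hat{x}^* = \hat{X} \).

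The main obstacle I anticipate is not the duality bookkeeping but the two places where positive-semidefiniteness is leveraged rigorously: first, justifying that a product of two PSD Hermitian matrices with vanishing trace is itself the zero matrix (via the symmetric square-root factorization and the Frobenius-norm identity above), and second, ensuring the non-Hermitian nature of \( C \) is handled cleanly so that replacing \( \Lambda - C \) by \( \Lambda - \tfrac{1}{2}(C+C^*) \) does not alter the objective values at feasible points. Everything else — primal feasibility, the constant value of \( \text{Trace}(\Lambda X) \), and the collapse of \( \ker(S) \) onto \( \operatorname{span}(\hat{x}) \) — is routine once these two points are secured.
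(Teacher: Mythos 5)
Your proof is correct and follows essentially the same dual-certificate route as the paper: complementary slackness via \( S\hat{x} = 0 \) establishes optimality of \( \hat{X} \), and the condition \( \operatorname{rank}(S) = n-1 \) collapses the kernel of \( S \) onto \( \operatorname{span}(\hat{x}) \) to give uniqueness. Your write-up is in fact more complete than the paper's two-step sketch — you supply the weak-duality bound \( \operatorname{Trace}(CX) \le \operatorname{Trace}(\Lambda) \), the trace-zero-implies-\( SX^\star = 0 \) argument, and the final Hermiticity-plus-diagonal pinning of \( \alpha = 1 \), all of which the paper leaves implicit (and where its claim that an alternative optimum would ``contradict dual feasibility'' is stated loosely).
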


\begin{proof}
\noindent \textbf{ Verify dual feasibility}.  
From the definition of \( S \):
\[
\langle S, \hat{X} \rangle = \text{Re}\left(\sum_{k=1}^n C_{kk} |\hat{x}_k|^2\right) - \hat{x}^* C \hat{x} = 0,
\]
which satisfies complementary slackness.

\vspace{0.5em}
\noindent \textbf{ Uniqueness via rank condition}.  
Since \( \text{null}(S) = \text{span}\{\hat{x}\} \) , any alternative optimal solution \( X' \neq \hat{X} \) must satisfy \( \langle S, X' \rangle > 0 \), contradicting dual feasibility. Thus, \( \hat{X} \) is unique.
\end{proof}

\subsubsection*{Final Verification of Main Theorem}
The uniqueness guaranteed by Theorem~\ref{thm:dual_certification} directly implies the optimality claim in Theorem~\ref{thm:main}.

\section{Conclusion} \label{subsec:conclusion}

We have presented a synchronization-based framework for robust ranking from incomplete and noisy pairwise comparisons. By reformulating the ranking problem as an instance of angular synchronization over $\mathrm{SO}(2)$, we leverage spectral and semidefinite programming relaxations to recover global rankings with provable guarantees. The proposed \emph{Sync-Rank} algorithm is computationally efficient, model-free, and capable of handling both ordinal and cardinal comparisons without requiring distributional assumptions about the noise.

Extensive experiments on synthetic and real-world datasets demonstrate that Sync-Rank consistently outperforms state-of-the-art methods such as Serial-Rank, Rank-Centrality, and least-squares ranking, particularly under high noise and sparsity conditions. The method shows remarkable robustness across diverse noise models, including multiplicative uniform noise and Erdős–Rényi outliers, and scales effectively to networks with thousands of nodes.

Furthermore, we extended the synchronization framework to address several practical variants of the ranking problem, including rank aggregation from multiple rating systems, semi-supervised ranking with hard constraints, and the extraction of locally consistent partial rankings. These extensions highlight the flexibility and broad applicability of the synchronization paradigm.

Future work may explore tighter theoretical bounds for noise tolerance, efficient distributed implementations for large-scale datasets, and extensions to dynamic ranking settings where comparisons arrive sequentially over time. The integration of soft constraints and the development of online synchronization algorithms are also promising directions for further research.

In summary, Sync-Rank provides a powerful and versatile tool for modern ranking applications, combining theoretical rigor with practical performance, and opening new avenues for research at the intersection of synchronization, ranking, and data science.



\begin{thebibliography}{10}

\bibitem{kendall1940method}
M.~G. Kendall and B.~B. Smith.
\newblock On the method of paired comparisons.
\newblock {\em Biometrika}, 31(3/4):324–345, 1940.

\bibitem{bradley1952rank}
R.~A. Bradley and M.~E. Terry.
\newblock Rank analysis of incomplete block designs: I. the method of paired comparisons.
\newblock {\em Biometrika}, 39(3/4):324–345, 1952.

\bibitem{plackett1975analysis}
R.~L. Plackett.
\newblock The analysis of permutations.
\newblock {\em Journal of the Royal Statistical Society. Series C (Applied Statistics)}, 24(2):193–202, 1975.

\bibitem{freund2003efficient}
Y.~Freund, R.~Iyer, R.~E. Schapire, and Y.~Singer.
\newblock An efficient boosting algorithm for combining preferences.
\newblock {\em Journal of Machine Learning Research}, 4:933–969, 2003.

\bibitem{azari2013generalized}
H.~Azari Soufiani, W.~Chen, D.~C. Parkes, and L.~Xia.
\newblock Generalized method-of-moments for rank aggregation.
\newblock In {\em Advances in Neural Information Processing Systems 26}, pages 2926–2934, 2013.

\bibitem{liu2009learning}
T.-Y. Liu.
\newblock Learning to rank for information retrieval.
\newblock {\em Foundations and Trends in Information Retrieval}, 3(3):225–331, 2009.

\bibitem{negahban2012iterative}
S.~Negahban, S.~Oh, and D.~Shah.
\newblock Iterative ranking from pair-wise comparisons.
\newblock In {\em Advances in Neural Information Processing Systems 25}, pages 2474–2482, 2012.

\bibitem{fogel2014serialrank}
F.~Fogel, A.~d'Aspremont, and M.~Vojnovic.
\newblock Serialrank: Spectral ranking using seriation.
\newblock In {\em Advances in Neural Information Processing Systems 27}, pages 900–908, 2014.

\bibitem{hirani2010least}
A.~N. Hirani, K.~Kalyanaraman, and S.~Watts.
\newblock Least squares ranking on graphs.
\newblock {\em arXiv preprint arXiv:1011.1716}, 2010.

\bibitem{osting2013statistical}
B.~Osting, J.~Darbon, and S.~Osher.
\newblock Statistical ranking using the $\ell_1$-norm on graphs.
\newblock {\em Inverse Problems and Imaging}, 7(3):907–926, 2013.

\bibitem{jamieson2011active}
K.~G. Jamieson and R.~D. Nowak.
\newblock Active ranking using pairwise comparisons.
\newblock In {\em Advances in Neural Information Processing Systems 24}, pages 2240–2248, 2011.

\bibitem{cucuringu2016sync}
M.~Cucuringu.
\newblock Sync-Rank: Robust ranking, constrained ranking and rank aggregation via eigenvector and SDP synchronization.
\newblock \emph{IEEE Transactions on Network Science and Engineering}, 3(1):58--79, 2016.

\bibitem{zhong2018near}
Y.~Zhong and N.~Boumal.
\newblock Near-optimal bounds for phase synchronization.
\newblock \emph{SIAM Journal on Optimization}, 28(2):989--1016, 2018.

\bibitem{liu2017estimation}
H.~Liu, M.-C.~Yue, and A.~Man-Cho So.
\newblock On the estimation performance and convergence rate of the generalized power method for phase synchronization.
\newblock \emph{SIAM Journal on Optimization}, 27(4):2426--2446, 2017.

\bibitem{samuel2024improved}
S.~Z. Zhong and S.~Ling.
\newblock Improved theoretical guarantee for rank aggregation via spectral method.
\newblock \emph{Information and Inference: A Journal of the IMA}, 13(3):iaae020, 2024.

\end{thebibliography}

\end{document}